\documentclass[sigconf]{acmart}

\usepackage{balance}
\usepackage[dvipsnames]{xcolor}
\usepackage{multirow}
\usepackage{graphicx}
\usepackage{subcaption}
\usepackage{enumitem}

\usepackage{amsmath,amssymb,amsfonts}
\usepackage{algorithm}
\usepackage{algorithmic}

\usepackage{tabularx}

\usepackage{diagbox}

\usepackage[english]{babel}
\usepackage[T1]{fontenc}

\usepackage{amsmath,amsthm,amssymb,bbm}
\allowdisplaybreaks[4]

\usepackage{enumitem} 

\usepackage{amsfonts} 
\usepackage{stmaryrd} 
\SetSymbolFont{stmry}{bold}{U}{stmry}{m}{n} 
\usepackage{bm} 

\RequirePackage[l2tabu, orthodox]{nag} 
\usepackage{booktabs} 
\usepackage{color} 
\usepackage{framed} 
\usepackage{graphicx} 
\usepackage{hyperref} 
\usepackage{lipsum} 
\usepackage{listings} 
\usepackage{tikz} 
\usetikzlibrary{positioning,chains,fit,shapes,calc}
\usepackage[colorinlistoftodos]{todonotes} 
\usepackage{url} 
\usepackage{xspace} 
\usepackage{cleveref} 
\crefname{section}{Sec.}{Sec.}
\Crefname{section}{Sec.}{Sec.}

\newcommand{\orcidID}[1]{\href{https://orcid.org/#1}{\includegraphics[height=0.8em]{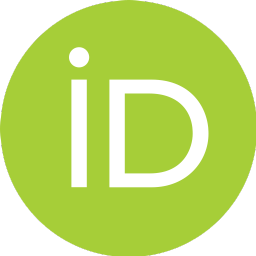}}}

\newcolumntype{C}[1]{>{\centering\arraybackslash\hspace{0pt}}p{#1}}
\def\BibTeX{{\rm B\kern-.05em{\sc i\kern-.025em b}\kern-.08em
   T\kern-.1667em\lower.7ex\hbox{E}\kern-.125emX}}

\newif\ifsubmission
\submissiontrue

\theoremstyle{plain}

\newtheorem{theorem}{Theorem}
\newtheorem*{thm*}{Theorem}

\newtheorem*{prop*}{Proposition}

\newtheorem*{lem*}{Lemma}

\theoremstyle{definition}

\theoremstyle{plain}

\newtheorem*{conve*}{Convention}
\theoremstyle{remark}

\newtheorem*{rmk*}{Remark}

\newtheorem*{notation*}{Notation}

\let\originalleft\left
\let\originalright\right
\renewcommand{\left}{\mathopen{}\mathclose\bgroup\originalleft}
\renewcommand{\right}{\aftergroup\egroup\originalright}

\newcommand*{\mbb}{\mathbb}

\newcommand*{\Z}{\ensuremath{\mbb{Z}}}

\newcommand*{\numClients}{\ensuremath{N}}
\newcommand*{\numMal}{\ensuremath{M}}
\newcommand*{\dimModel}{\ensuremath{d}}
\newcommand*{\model}{\alg{w}}
\newcommand*{\aggModel}{\bar{\model}}
\newcommand*{\cosSim}{\ensuremath{\mathrm{cos\_sim}}}

\newcommand*{\precision}{\ensuremath{\lambda}}

\newcommand*{\rand}{\alg{o}}
\newcommand*{\com}{\alg{c}}
\newcommand*{\oAvg}{\bar{\rand}} 
\newcommand*{\cAvg}{\bar{\com}} 
\newcommand*{\ip}{\alg{ip}} 
\newcommand*{\THD}{\mathrm{TH}}
\newcommand*{\drand}{\Delta}
\newcommand*{\agg}{\alg{agg}}

\newcommand*{\comEll}{\com^{\ell}} 
\newcommand*{\comNorm}{\com^{\mathrm{Norm}}} 
\newcommand*{\comNormSq}{\com^{\mathrm{Norm}^2}} 
\newcommand*{\comCard}{\com^{\alg{card}}} 

\newcommand*{\NIZK}{{\Pi}}

\newcommand*{\ZKP}{\mathsf{NIZK}}
\newcommand*{\COM}{\alg{COM}}
\AtBeginDocument{}

\newcommand*{\msg}{\mathsf{msg}}

\newcommand*{\GG}{\mathbb{G}}

\newcommand*{\alg}[1]{\mathsf{#1}}

\AtBeginDocument{%
  \providecommand\BibTeX{{%
    Bib\TeX}}%
}

\setcopyright{acmlicensed}
\copyrightyear{2026}
\acmYear{2026}
\acmDOI{XX.XXXX/XXXXXXX.XXXXXXX}

\acmConference[CONFERENCE '27]
  {ACM ASIA Conference on Computer and Communications Security}
  {12-16 July 2027}
  {Macau}

\acmBooktitle{Proceedings of the Full Name of the ACM Conference
  (CONFERENCE '26), Month DD--DD, 2026, City, Country}

\acmISBN{978-1-4503-XXXX-X/2026/XX}

\begin{document}

\title{End-to-End Verifiable and Robust Federated Learning}

%

\author{Doryan Lesaignoux}
\affiliation{%
  \institution{AIT Austrian Institute of Technology}
  \city{Vienna}
  \country{Austria}
}

\author{Enrique Mármol Campos}
\affiliation{%
  \department{Department of Computer Engineering}
  \institution{University of Murcia}
  \city{Murcia}
  \country{Spain}
}

\author{Gabriele Spini}
\affiliation{%
  \institution{AIT Austrian Institute of Technology}
  \city{Vienna}
  \country{Austria}
}

\author{José L. Hernández-Ramos}
\affiliation{%
  \department{Department of Computer Engineering}
  \institution{University of Murcia}
  \city{Murcia}
  \country{Spain}
}

\author{Stephan Krenn}
\affiliation{%
  \institution{AIT Austrian Institute of Technology}
  \city{Vienna}
  \country{Austria}
}

\renewcommand{\shortauthors}{Lesaignoux et al.}


\begin{abstract}
Federated learning enables multiple parties to train a shared model
without centralizing raw data with the help of an aggregator, but
introduces integrity risks once participants or infrastructure are not
fully trustworthy. Two requirements are particularly important:
robustness to poisoned or Byzantine client updates, and verifiability
of the aggregator so that clients or third parties can audit the
reported aggregation without learning individual updates. Existing work
has largely treated these goals separately, and efficient public
verifiability for robust, outlier-excluding aggregation remains
limited. We present a verifiable federated learning protocol that makes a robust
aggregation pipeline publicly auditable. Our design combines
cryptographic commitments with non-interactive zero-knowledge proofs
to certify both (i) cosine-similarity-based outlier exclusion and
(ii) aggregation over the selected set, without revealing individual
client updates to verifiers. In experiments under representative poisoning attacks, our method maintains high accuracy, with an average accuracy loss below 4\%
across the evaluated configurations, while keeping verification
overhead practical: proof artifacts can be generated and verified
within minutes at the scale studied. In summary, our results show that
robust outlier exclusion and public verifiability can be jointly
achieved in a federated learning setting.
\end{abstract}

%

\begin{CCSXML}
<ccs2012>
 <concept>
  <concept_id>10002978.10002997</concept_id>
  <concept_desc>Security and privacy~Distributed systems security</concept_desc>
  <concept_significance>500</concept_significance>
 </concept>
 <concept>
  <concept_id>10002978.10003006.10003007</concept_id>
  <concept_desc>Security and privacy~Cryptography~Public key cryptography</concept_desc>
  <concept_significance>300</concept_significance>
 </concept>
 <concept>
  <concept_id>10010147.10010257</concept_id>
  <concept_desc>Computing methodologies~Machine learning</concept_desc>
  <concept_significance>300</concept_significance>
 </concept>
</ccs2012>
\end{CCSXML}

\ccsdesc[500]{Security and privacy~Distributed systems security}
\ccsdesc[300]{Security and privacy~Cryptography}
\ccsdesc[300]{Computing methodologies~Machine learning}

\keywords{federated learning, poisoning attacks, Byzantine robustness,
zero-knowledge proofs, cryptographic commitments, public verifiability}

\maketitle


\section{Introduction}\label{sec:introduction}

Federated Learning (FL) enables multiple data holders to train a shared machine-learning model without centralizing raw data~\cite{mcmahan2017communication,zhang2021survey}. In a standard FL workflow, clients train local models on private datasets and send model updates to a central server (the aggregator), which combines them using an aggregation function. This design provides a practical path to collaborative learning under privacy or confidentiality constraints.

Despite its privacy benefits, FL raises major integrity challenges once participants and infrastructure are not fully trusted. Trustworthiness in FL requires handling at least the following three complementary threats:

\begin{enumerate}[label=(\roman*)]
    \item \label{threat:byzantine}
    malicious or Byzantine clients that submit poisoned updates to
    degrade convergence or implant backdoors%
    ~\cite{bhagoji2019analyzing,shi2022challenges};

    \item \label{threat:privacy}
    the aggregator should learn as little information as possible about
    the participants' local data or model updates, ideally revealing
    only the final aggregated result while keeping individual
    contributions private%
    ~\cite{behnia2024efficient,DBLP:journals/iacr/ChandranOS26,lu25}; and

    \item \label{threat:aggregator}
    a deviating aggregator that may tamper with the aggregation
    computation or its outcome%
    ~\cite{zhu2024malicious}.
\end{enumerate}

These threats are often addressed separately, through robust aggregation tolerating outliers and poisoned updates~\cite{uddin2025systematic,yin2018byzantine,blanchard2017machine,pillutla2022robust,fung2018mitigating, li2021lomar, mu2024feddmc,campos2025flaegis}, deployment of cryptographic schemes such as secure multi-party computation or fully homomorphic encryption~\cite{lu25}, and verifiability mechanisms that allow clients (or external auditors) to check that aggregation was performed correctly~\cite{xu2025efficient,sabater2022accurate, zhu2024malicious} without requiring plaintext access to the individual updates.

However, considering these challenges independently leads to incompatibilities, as the solutions can, in general, not be combined in a ``plug-and-play'' fashion.
This is not only due to incompatible parameters or cryptographic methods in~\ref{threat:privacy} and~\ref{threat:aggregator}, but more fundamentally also to the internal structure of the aggregation techniques in~\ref{threat:byzantine}.
Specifically, many robustness mechanisms rely on highly non-linear or data-dependent steps (including but not limited to aspects like sorting, iterative thresholding, or clustering decisions), which are hard to realize in a data-oblivious way.
For instance, no data-dependent branching may be recognizable to the aggregator (in case of~\ref{threat:privacy}) or a verifier (in case of~\ref{threat:aggregator}), making simple functions like computing medians or sorting computationally expensive;
even worse, already comparing two values becomes relatively expensive, as data-dependent computations must often be implemented with worst-case execution behavior (e.g., in the bit-length of the values to be compared) to not leak any information about the underlying data (e.g., at which bit position the decision was taken).
Finally, despite recent progress, computing on floating-point numbers introduces significant overheads, suggesting integer computations as the natural candidate.
This leads to a conflict between high robustness guarantees on the one hand, and high privacy and verifiability needs on the other.

In this paper we thus aim at solutions simultaneously achieving~\ref{threat:byzantine} and~\ref{threat:aggregator}, leading to the following research question:

\medskip

\noindent\colorbox{gray!25}{%
  \parbox{0.98\columnwidth}{%
    Can a verifiable robust aggregation pipeline be designed that certifies both outlier selection and final aggregation while retaining competitive empirical robustness against representative poisoning attacks within practically plausible runtime?
  }%
}

\medskip

It is worth noting that we deliberately leave aside threat~\ref{threat:privacy}, as addressing all three challenges simultaneously would be beyond the scope of a single article.
Nevertheless, the challenges in achieving robustness in~\ref{threat:privacy} and~\ref{threat:aggregator} are often closely related, particularly regarding data-obliviousness and efficient computation over protected values, so that we expect some of the techniques developed here may carry over.

\subsection{Contributions and Overview of results}
In particular, we make the following contributions:

\paragraph{ZK-friendly robust aggregation}
We propose a threshold-based robust aggregation pipeline designed around operations suitable for zero-knowledge proofs, mainly algebraic relations and comparisons. Unlike conventional verifiable aggregation, which typically focuses on secure sums, our design balances robustness and proof efficiency. We use cosine similarity~\cite{KOTU2019343} to detect inconsistent updates, exclude them through threshold-based selection, and aggregate the remaining updates using weighted aggregation. While the basic version uses a static threshold, we also consider semi-static and dynamic variants. Against several poisoning attacks, our method achieves low and stable accuracy degradation while using a proof-friendly aggregation rule, avoiding catastrophic failures across the evaluated configurations. 

\paragraph{Zero-knowledge public verifiability}
We introduce a commitment- and NIZK-based verification layer that enables public auditing of the complete robust aggregation pipeline. Using commitments published by clients, the aggregator proves: (i) correct cosine-similarity computations, (ii) correct threshold-based inclusion or exclusion, and (iii) correct weighted aggregation over the selected updates. These proofs reveal neither individual client updates nor selection decisions, including the number of rejected clients. Hiding this information also limits feedback that adaptive attackers could exploit to craft updates close to the rejection threshold.

\paragraph{Characterizing robustness--verifiability costs}
We evaluate both robustness against representative poisoning attacks and the cost of public verification in terms of proof size, generation time, and verification time, including scalability with model dimension and number of clients. We implement a prototype of our verification pipeline and analyze the impact of alternative configurations, including metadata disclosure and more advanced thresholding strategies.
\subsection{Related Work}\label{sec:Related}

FL enables collaborative training without centralizing raw data~\cite{mcmahan2017communication}. In adversarial settings, integrity is mainly addressed through robustness against malicious clients and verifiability of the aggregator. However, most verifiable FL schemes remain limited to largely linear aggregation rules~\cite{KR25}.

\paragraph{Robustness to poisoning and Byzantine clients}
Malicious clients can degrade convergence or implant backdoors through crafted updates~\cite{bhagoji2019analyzing,fang2020local}. Representative attacks include LIE~\cite{baruch2019little} and Min-Max/Min-Sum~\cite{shejwalkar2021manipulating}. Defenses include median/trimmed mean~\cite{yin2018byzantine}, Krum/Multi-Krum~\cite{blanchard2017machine}, Bulyan~\cite{mhamdi2018hidden}, geometric-median approaches~\cite{pillutla2022robust}, trust-based methods~\cite{cao2020fltrust}, and similarity-based detectors~\cite{fung2018mitigating}. Yet, all these assume an honest execution of the aggregation procedure.

\paragraph{Verifiable aggregation}
Verifiable FL aims to prove that the server correctly computes the aggregate. VerifyNet~\cite{xu2019verifynet} combines privacy masking with homomorphic-hash verification, while VeriFL~\cite{guo2020v} reduces verification overhead for high-dimensional gradients. Other approaches use HE~\cite{wang2024priverifl}, blockchain-based auditing~\cite{wang2025privacy,chen2024litechain}, or commitment- and ZK-based verification~\cite{sabater2022accurate,bottoni2022verifiable}. These techniques are most efficient for linear computations, whereas data-dependent filtering and selection are considerably harder to prove~\cite{KR25}.

\paragraph{Combining robustness and verifiability}
Few works jointly consider malicious clients and a deviating aggregator. BVDFed~\cite{gao2024bvdfed} combines loss-based filtering with aggregate verification, RiseFL~\cite{zhu2024secure} verifies per-update constraints using ZK proofs, and RFLPV~\cite{wang2024rflpv} combines robustness with verifiable aggregation. Yet, these approaches do not verify the robustness decisions themselves.

In contrast, we target end-to-end verification of an outlier-excluding aggregation pipeline under both malicious clients and a potentially malicious aggregator. Our protocol uses proof-friendly linear relations and comparisons to verify similarity computation, threshold-based selection, and aggregation over accepted updates, while preserving update privacy.

\paragraph{Combining secure and verifiable aggregation}
Secure aggregation hides individual updates while allowing computation of their aggregate. Existing schemes commonly use MPC or secret sharing~\cite{buyukates2024lightverifl,roy2022eiffel,behnia2024efficient,peng2023communication} or HE~\cite{madi2021secure,wang2025effective}, often combined with verification mechanisms. This aspect is outside our scope, since we assume plaintext client inputs. Moreover, these works usually consider a malicious aggregator but standard aggregation, whereas our model also includes malicious clients.

\paragraph{Combining secure and robustness}
Combining privacy and Byzantine robustness is difficult because robust rules require comparisons and distance computations that are expensive under HE or MPC. BREA and ByzSecAgg~\cite{so2020byzantine,jahani2025byzsecagg} combine MPC with Multi-Krum-style techniques, ShieldFL~\cite{ma2022shieldfl} uses HE and cosine similarity, and SEAR~\cite{zhao2021sear} relies on a TEE. PRFL~\cite{liu2024prfl} combines MPC with FLTrust under a malicious-server model, but focuses mainly on privacy leakage through deviation or collusion. Overall, existing secure-robust schemes rarely address an aggregator that actively manipulates the robust aggregation result.

\subsection{Outline}
\Cref{sec:Preliminaries} introduces the required concepts, while \Cref{sec:Methodology} presents the system and threat model. 
\Cref{sec:robust_verif_fl} describes the proposed method, which is evaluated in \Cref{sec:Results}.
We briefly conclude in \Cref{sec:Conclusions}.
\section{Preliminaries}\label{sec:Preliminaries}

In this section, we overview poisoning attacks in FL, summarize defenses based on similarity filtering and robust aggregation, and outline the cryptographic building blocks used in our framework.
\Cref{table:parameters} summarizes the notation used throughout the article.

\begin{table}[th!]
	\begin{center}
\caption{Overview of main notation and parameters}
\label{table:parameters}
\begin{tabular}{cl}
\toprule
\textbf{Parameter} & \textbf{Description} \\
\midrule
$G,H$ & Generators of $\GG$ \\
$\numClients$ & Number of clients \\
$\numMal$ & Number of malicious clients \\
$\model_i$ & Weights of $i^{th}$ client \\
$\dimModel$ & Dimension of the weights \\
$\aggModel$ & Aggregated weights \\
$\model$ & Final aggregation of weights \\
$q$ & Arithmetic modulus \\
$\precision$ & Precision parameter \\
$\THD$ & Threshold of cosine similarities \\
$\com, \rand$ &Commitment and opening value\\
 $\ell_i$ & Original norm of weight $\model_i$\\
\bottomrule
\end{tabular}
\end{center}
\end{table}

\subsection{Poisoning Attacks in FL and Mitigation}

Poisoning attacks in FL aim to degrade training or induce targeted misbehavior by manipulating the information contributed by a subset of clients, typically through corrupted local data or adversarially crafted model updates~\cite{xia2023poisoning}.
We distinguish two common views.
In data poisoning, the adversary modifies a client’s local training data to bias the learned update (e.g., to implant a backdoor trigger or to shift decision boundaries) \cite{neto2023survey,lyu2020threats}.
In model poisoning (or local update poisoning), the adversary directly crafts the transmitted update either to broadly hinder convergence or to achieve a targeted effect while remaining hard to detect~\cite{bhagoji2019analyzing,fang2020local}.
Modern evasion attacks explicitly optimize poisoned updates to pass common defenses, including small-perturbation strategies (e.g., LIE~\cite{baruch2019little}) and optimization-based attacks tailored to robust aggregation rules (e.g., Min-Max/Min-Sum~\cite{shejwalkar2021manipulating}).

Mitigations broadly fall into two families.
Robust aggregation replaces simple averaging with rules that reduce the influence of outliers, such as coordinate-wise robust statistics (median/trimmed mean) or distance-based selection (Krum)~\cite{yin2018byzantine, blanchard2017machine}.
Filtering-based defenses attempt to identify suspicious clients, often using similarity signals, and exclude them before aggregation~\cite{fung2018mitigating, li2021lomar, mu2024feddmc}.
Recent evidence supports two-stage pipelines that combine filtering with a robust aggregator as particularly effective against poisoning~\cite{campos2025flaegis}.
In this paper, we follow this design principle and use cosine similarity as the key signal for update consistency:

\begin{equation}\label{eq:cosimilarity}
    \mathrm{cos\_sim}(\mathbf{x},\mathbf{y})
= \frac{\left\langle\mathbf{x}, \mathbf{y}\right\rangle}{\|\mathbf{x}\|_2\,\|\mathbf{y}\|_2}
= \frac{\sum_{i=1}^{n} x_i\,y_i}{\sqrt{\sum_{i=1}^{n} x_i^2}\;\sqrt{\sum_{i=1}^{n} y_i^2}}
\end{equation}

which we leverage to separate consistent updates from outliers prior to aggregation~\cite{KOTU2019343}.
After filtering, we aggregate the retained updates using an averaging-style rule;
our full pipeline additionally supports similarity-aware weighting to reduce the impact of borderline or misclassified updates.

\subsection{Cryptographic Preliminaries}
This subsection introduces the cryptographic notions and proof systems.
The presentation of the underlying primitives remains at a high level; the security goals and resulting guarantees of the complete protocol are stated in \Cref{s:security_model}.
For further details, we refer the reader to the appendix and the original literature.

In the following, $(\GG, +)$ denotes a finite cyclic group and $\Z_q$ the finite field of integers modulo a prime $q$.

\paragraph{Commitments}
  A commitment scheme is a cryptographic protocol that allows one party to commit to a chosen value while keeping it hidden, with the ability to reveal it later.
  It guarantees hiding, so the value remains secret until opening, and binding, so the committer cannot change the value after committing.

  Our construction will rely on so-called Pedersen commitments~\cite{Pedersen1991}.
  Given two generators $G,H$ in $\GG$, a commitment to a message $\msg\in\Z_q$ is given by $\com = \COM(\msg,\rand) = \msg\cdot G + \rand\cdot H$ for a uniformly random opening $\rand$.
  Given a commitment $\com$, a message $\msg$, and an opening $\rand$, verification is done in the canonical way by checking that the above equality holds.

\paragraph{Zero-knowledge proofs of knowledge}
  These are cryptographic protocols that allow a prover to convince a verifier that they possess a secret value satisfying a given relation, without revealing any information about the secret itself other than what is already revealed by the claim itself.
  Such a protocol guarantees completeness, soundness, and zero-knowledge, meaning that an honest prover can convince an honest verifier, a cheating prover cannot convince the verifier without knowing the secret, and the verifier learns nothing other than the truth of the statement.
  A \emph{non-interactive zero-knowledge proof} (NIZK) is a variant in which the proof consists of a single message from the prover to the verifier, typically enabled by a common reference string or a random oracle model.

  We use the Camenisch-Stadler framework for representing proof goals~\cite{Camenisch1997}.
  For instance, 
\[
\begin{aligned}
\NIZK \gets \ZKP\big[(\alpha,\beta,\gamma):\;
& Y = \alpha G + \beta H \;\land \\
& Z = \alpha G + \gamma H \;\land\;\gamma = \alpha \cdot \beta
\big]
\end{aligned}
\]
denotes a NIZK demonstrating knowledge of witnesses $\alpha, \beta, \gamma$ satisfying the given relations.
All values except witnesses are assumed to be public.
All statements used in this paper can efficiently be proven using standard techniques~\cite{schnorr1991efficient,fiat-shamir,maurer-main-protocol,DBLP:conf/crypto/CramerDS94,DBLP:phd/dnb/Krenn12}.
\section{Threat Model}\label{sec:Methodology}

This section introduces the FL setting considered in this work, along with the assumptions that define our threat model.
We then describe the bulletin board for letting clients or external auditors verify that the aggregation was performed correctly.

\subsection{Threat Model}
We consider a standard cross-silo FL deployment with $\numClients$ clients training a shared model through a central aggregator.
In each training round, clients compute a local update and send it to the aggregator, which applies an aggregation procedure and returns the resulting global model.

\paragraph{Malicious clients}
We assume that up to $\numMal$ clients are Byzantine and may send arbitrary weights with the goal of degrading the global model's utility (untargeted poisoning) or inducing targeted behavior (e.g., backdoors).
Malicious clients may collude and coordinate their weights. Following common assumptions in Byzantine-robust FL, we consider $\numMal \leq \numClients/2$, which is also required by clustering/filtering-based defenses.

\paragraph{Deviating aggregator}
We also consider a corrupted aggregator that may deviate arbitrarily from the prescribed protocol.
Concretely, the aggregator may (i) output an aggregate that is inconsistent with the received client weights, (ii) alter intermediate computations (e.g., similarity scores, thresholds, or selection decisions), or (iii) selectively omit, re-weight, or replace weights.
This captures the integrity threat that verifiability is designed to address.

\paragraph{Visibility and Knowledge}
We assume the aggregator receives plaintext client weights in each round (secure aggregation is beyond the scope of this work).
Honest clients cannot observe other clients’ weights, while malicious clients can share information.
Attackers may know the learning task and training procedure;
the key threat is that they can craft weights (clients) or deviate from the prescribed computation (aggregator).

\paragraph{Objective}
The goal of the adversaries, Byzantine clients and/or a corrupted aggregator, is to cause honest clients to accept a global model whose performance is degraded or maliciously biased.

Our robustness mechanism mitigates the impact of Byzantine weights under its assumptions, while the verifiability layer ensures that the aggregator cannot deviate from the prescribed robust aggregation pipeline without being detected by auditors.

\subsection{Security Model and Guarantees}\label{s:security_model}
We consider a probabilistic polynomial-time adversary that may corrupt the aggregator and an arbitrary subset of clients, subject to the robustness assumptions stated above.

Our security goals are then as follows.

\paragraph{Aggregation soundness}
No probabilistic polynomial-time aggregator can produce an accepting proof for an output that is inconsistent with the publicly specified aggregation procedure applied to the complete finalized set of validly authenticated inputs for the corresponding training round, except with negligible probability.

\paragraph{Public-transcript privacy}
The public transcript, including the commitments and zero-knowledge proofs, reveals no information about individual client updates, client-level inclusion decisions, or the number of selected clients beyond what is implied by the public inputs, authenticated submission metadata, and the released aggregate.
This guarantee applies to public verifiers and to clients that do not collude with the aggregator.
The aggregator receives the client updates and commitment openings in plaintext and is therefore outside the privacy boundary.

\section{Verifiable and Robust Federated Learning}\label{sec:robust_verif_fl}

As stated before, FL is often deployed in environments where neither clients nor the coordinating infrastructure are fully trusted. Robust aggregation is thus needed to tolerate malicious participants, while verifiability enables clients or third parties to audit the aggregator. Yet, public verification constrains robust defenses: methods such as coordinate-wise median or Krum rely on sorting and data-dependent operations that are costly to prove at model scale~\cite{yin2018byzantine}. This motivates robust pipelines based on proof-friendly primitives, mainly linear relations and simple comparisons. In this section, we present our robust and publicly verifiable FL protocol. We first describe the aggregation pipeline and design choices that preserve robustness while enabling efficient verification, then introduce the corresponding mechanisms, where commitments and NIZKPs allow third parties to audit the computation without revealing client weights.

We build on FLAegis~\cite{campos2025flaegis}, which combines filtering of suspicious updates with robust aggregation to mitigate malicious updates that escape detection. However, several operations in FLAegis are difficult to verify efficiently. We therefore redesign the pipeline into a more ``ZK-friendly'' variant, replacing complex procedures with operations suitable for efficient proofs. The resulting filtering and aggregation steps can be publicly certified while retaining robustness against poisoning attacks. More expressive variants are possible at the cost of higher verification complexity; their trade-offs are discussed in the following subsections and the appendix.

\Cref{alg:main_algorithm} specifies the procedure of our method, consisting of three main stages: (1) compute a similarity score for each client update, (2) cluster division, and (3) aggregate the selected weights.

\begin{algorithm}
	\caption{Our ZK-Friendly Robust Aggregation}\label{alg:main_algorithm}
	\begin{algorithmic}[1]
		\REQUIRE Number of clients $\numClients$, threshold $\THD\in(0,1)$, client weights $\left(\model_i\right)_{i\in\numClients}$, commitments $\left(\com_{i,j}\,:j=1,\dots,\dimModel\right)_{i=1,\dots,\numClients}$ to normalized weight coefficients and commitments $\left(\comNorm_i\right)_{i=1,\dots,\numClients}$ to weight norms.
		\ENSURE Aggregated weights $\model$ and proofs $\left(\NIZK^{\ip}, \NIZK^{\THD}, \NIZK^{\textrm{agg}}\right)$.
		\color{Blue}
        \STATE \textbf{- Extension 1: input validation -}
	\STATE \textit{The clients provide a proof $\NIZK^{\mathsf{Norm2}}_i$ that their inputs and norms are consistent with the provided commitments.}
	\color{Black}
        \STATE \textbf{- Computing similarities -}
	\STATE $\aggModel \gets \sum_{i\in \numClients}\model_i$ \textcolor{gray}{\small\tt//scaled reference model}

		\FOR{each client $i\in\numClients$}
		\STATE $\cosSim_i \gets \cosSim(\aggModel,\model_i)$
		\ENDFOR
		\STATE $\mathcal{C} \gets (\cosSim_i)_{i\in\numClients}$
		\STATE Compute proof $\NIZK^{\ip}$ of correctness of the inner products

		\STATE \textbf{- Threshold selection and cluster division -}
		\STATE $\mathcal{C}_1 \gets \{\cosSim_i \in \mathcal{C} \mid \cosSim_i < \THD\}$ \textcolor{gray}{\small\tt//malicious inputs}
		\STATE $\mathcal{C}_2 \gets \{\cosSim_i \in \mathcal{C} \mid \cosSim_i \ge \THD\}$ \textcolor{gray}{\small\tt//benign inputs}
		\color{Blue}
		\STATE \textbf{- Extension 2: semi-static threshold -}
		\STATE Compute $\text{avg\_cos\_sim} \gets \frac{1}{\numClients}\sum_{i \in \numClients}\cosSim_i$
		\STATE $\mathcal{C}_1 \gets \left(\cosSim_i \in \mathcal{C} \mid \cosSim_i < \text{avg\_cos\_sim}\right)$
		\STATE $\mathcal{C}_2 \gets \left(\cosSim_i \in \mathcal{C} \mid \cosSim_i \ge \text{avg\_cos\_sim}\right)$
		\color{Black}
		\STATE Compute proof $\NIZK^{\THD}$ of correct threshold computation

		\STATE \textbf{- Aggregation phase -}
		\STATE $I \gets \{\,i \in \numClients \mid \cosSim_i \in \mathcal{C}_2\,\}$\label{alg-line:I}
		\STATE $\model \gets \frac{1}{|I|}\sum_{i\in I}\left(\model_i\right)$ \textcolor{gray}{\small\tt//final model}
		\color{Blue}
		\STATE \textbf{- Extension 3: weighted sum -}
		\STATE Compute $b_i \gets \frac{1}{1-\cosSim_i}$ for $i=1,\dots,\numClients$
		\STATE Compute $b \gets \sum_{i\in I}b_i$
		\STATE $\model \gets \sum_{i\in I}\left(\frac{b_i}{b} \cdot \model_i\right)$
		\color{Black}
		\STATE Compute proof $\NIZK^{\textrm{agg}}$ of correctness of final aggregation

		\RETURN $\model, \left(\NIZK^{\ip}, \NIZK^{\THD}, \NIZK^{\textrm{agg}}\right)$
	\end{algorithmic}
\end{algorithm}

We also refer to \Cref{fig:OurMethod} for an overview of the different phases explained in the following.
\begin{figure}
	\centering
	\includegraphics[width=\linewidth]{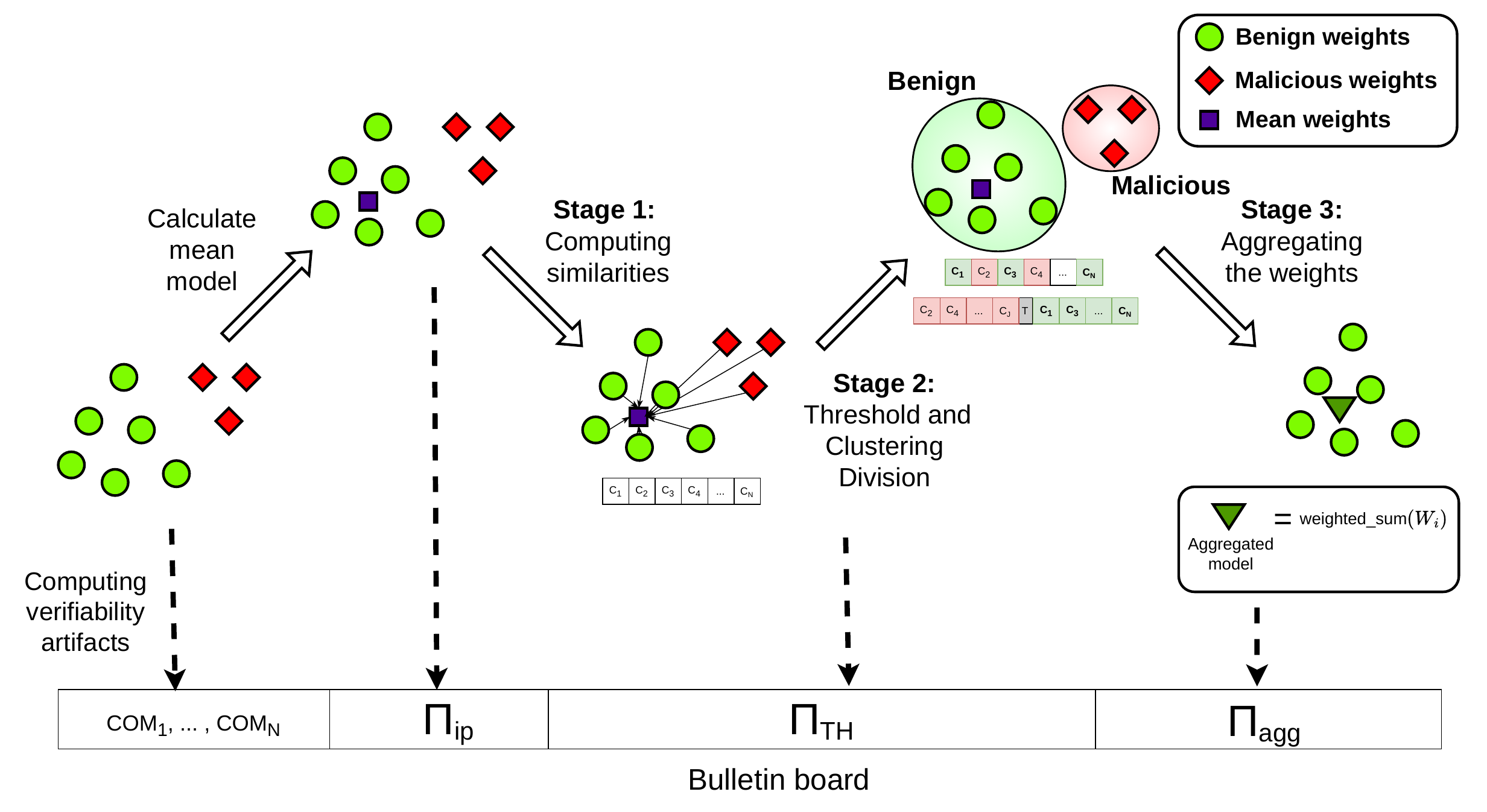}
	\caption{Pictorial description of our method.}
	\label{fig:OurMethod}
\end{figure}

\paragraph{Fixed-point arithmetic}
Before presenting our algorithm, we note that performing computations over floating-point numbers within zero-knowledge proofs is prohibitively expensive.
This is due to the complexity of accurately modeling floating-point semantics in arithmetic circuits, which leads to significant overhead.

To address this, we instead emulate fixed-point arithmetic.
Concretely, for a precision parameter $\precision \in \mathbb{N}$, we represent each real value $a \in \mathbb{R}$ by its scaled integer encoding $a' = \lfloor 2^{\precision} \cdot a \rceil$.
All computations are then carried out over these integer representations.
For ease of exposition, this transformation is left implicit and we are overloading notation whenever clear from the context.
For appropriately chosen parameters (cf.~\Cref{sec:discussion}), all computations can be carried out without overflow;
this can be ensured by standard bounds on the magnitude of intermediate values.

\paragraph{Modular design}
We adopt a modular approach in the presentation of our protocol.
This allows users to selectively enable or disable individual components when deploying our results, depending on the specifics of their application.
Moreover, this leads to clearer exposition and more comprehensible proof statements.
While combining all components into a single proof may yield minor efficiency gains (e.g., by avoiding redundant openings), we prioritize clarity and flexibility in our exposition.

\subsection{Input Preparation and Preprocessing}\label{sec:phase_input_preparation}

Input preparation consists of a series of steps as described next.

  \paragraph{Verifiability}
  To reduce the computational costs of verification, we let clients  submit their updated models in a certain format.
  That is, we require clients to submit a Pedersen commitment per normalized model update parameter as well as a commitment to the original norm of the model update, i.e., for a normalized model update of the form $\model_i=({\model{i,j}})_{j=1}^\dimModel$ and original norm $\ell_i$, they submit:
  \begin{align*}
  	\com_{i,j}&=\COM(\model_{i,j},\rand_{i,j})=\model_{i,j}\cdot G + \rand_{i,j}\cdot H\,.\\
    \comEll_{i}&=\COM(\ell_i,\rand_i^{(\ell)})=\ell_i\cdot G + \rand_i^{(\ell)}\cdot H\,.
  \end{align*}
  The aggregator additionally receives the openings $\rand_{i,j},\rand_i^{(\ell)}$ of the commitments, while the commitments $\com_{i,j},\comEll_i$ are published.

  Now, the aggregator checks that the norm of the normalized vectors indeed corresponds to $2^\precision$ (for some precision parameter $\precision$), and aborts otherwise, opening the malformed commitments to allow for verifying the malformed inputs.

\paragraph{Extension 1: Input Validation}
The description above makes some implicit non-collusion assumption between the aggregator and the clients, in the sense that the aggregator checks the norm of $\model_i$ and aborts if it does not correspond to $2^\precision$.

Overcoming this limitation can be achieved by the clients providing a proof that the individual coefficients are at most $2^\precision$ and the norm of the vector is close to this value (equality might not be achieved due to rounding errors in fixed-point arithmetic).

The proof goal is now given by:
\begin{align*}
	&\NIZK^{\mathrm{Norm}^2_i} \leftarrow
	\mathsf{NIZK}\bigg[\Big(
			(\model_{i,j}, \rand_{i,j},\nu_i,\rand_i, \drand_i^\nu)	:\\
	&\bigwedge_{j\in[\dimModel]} \com_{i,j} = \COM(\model_{i,j}, \rand_{i,j})\;\land\; \model_{i,j}\in[-2^\precision,2^\precision] &\land \\
	&\comNormSq_{i}=\sum_{j\in[\dimModel]}\model_{i,j}\cdot\com_{i,j} + H \cdot \drand_i^\nu\;\land\\
	&\comNormSq_{i}=\COM(\nu_i,\rand_i)\;\land\;\nu_i\in[L,R]
	\bigg]\,
\end{align*}
where the users also make the commitments $\comNormSq_i$ publicly available, and compute $\drand_i^{\nu}$ as $\drand_i^\nu=\rand_i-\sum \model_{i,j}\rand_{i,j}$. 

While not made explicit, all NIZKs follow best-practices in the sense that they are cryptographically bound to all necessary context information. 
That is, for our concrete instantiation, all Fiat--Shamir challenges are computed over the complete public statement, including all public inputs, system parameters, protocol and proof identifiers, session identifier, training-round identifier, and any other context information relevant to the respective proof.

It is easy to show that for a vector of reals with norm $1$, which is scaled by $2^\precision$ and rounded, the norm of the resulting vector lies in the following interval:
\[
	[L,R] = \left[\left(2^\precision-\frac{\sqrt{\dimModel}}{2}\right)^2,\, \left(2^\precision+\frac{\sqrt{\dimModel}}{2}\right)^2\right]
\]

This proof goal can, e.g., be instantiated using Bulletproofs.
Note that the $\dimModel+1$ range proofs can be batched~\cite{BunzBulletproofs2018}, resulting in an overall proof size of about $\mathcal{O}(\log_2(N)+\log_2(\precision))$ group elements.

We stress that, here and throughout the paper, all witnesses and intermediate values are restricted to publicly defined integer ranges. If the parameters, in particular $\precision$, are chosen as discussed in \Cref{sec:Results}, then every integer expression occurring in the proof relations has absolute value strictly smaller than $q/2$. 
Consequently, equality modulo the group order $q$ is equivalent to equality over the integers, and all range and comparison statements have their intended integer semantics.

\paragraph{Data sharing between clients and aggregator}
The aforementioned publication of the required inputs can be realized in different ways.
For instance, if clients are mutually known and jointly audit the aggregator, a (potentially anonymous) broadcast channel may be used.
For ease of exposition, however, we assume that data is shared via a \emph{bulletin board}, which also enables third-party auditing, cf.\ also \Cref{fig:OurMethod}.
More specifically, both clients and the aggregator have write access to the bulletin board, while any eligible verifier has read access.
The fundamental requirement for such a bulletin board is that it is \emph{append-only}, meaning that once data is published, it cannot be modified or deleted.
For concreteness, the reader may think of a blockchain as a canonical instantiation.

\paragraph{Input authentication}
While not being the central focus of this paper, a natural question is how to authenticate the inputs published by clients.
A straightforward solution is to require clients to sign their commitments before publication.
However, this approach relies on a public key infrastructure to identify eligible participants and reveals which client participates in each training round.

An alternative is to employ advanced signature schemes such as group signatures~\cite{BellareMW03,KrennSS25} without opening.
In this setting, a group manager issues signing keys to clients, allowing verifiers to confirm that a signature originates from a legitimate group member while preserving signer anonymity.
To prevent multiple submissions from a single (potentially malicious) client, one can use group signatures with controlled (or context-specific) linkability~\cite{GarmsL19,KrennSS19,ZhangLLYAW19}.
These schemes enable detection of multiple signatures from the same signer within a given training round, while maintaining anonymity across different rounds.

\paragraph{Avoiding replays and mix-and-match attacks}
Federated learning typically consists of multiple training rounds.
It is therefore important that the inputs and outputs of each round are uniquely determined to prevent, e.g., inputs from previous rounds being reused later.
To this end, we require that the aforementioned signatures authenticate not only the inputs but also all necessary context information (e.g., session identifier, round number, etc.) to prevent so-called mix-and-match attacks.

\medskip

However, while being central to the overall security of the overall construction, we will not further elaborate on these aspects, as there exists a variety of known solutions in the literature which can be modularly integrated and used.

\subsection{Computing Similarities}

Our algorithm next computes the cosine similarity between the provided inputs, in order to quantify their closeness and to obtain a basis for inclusion/exclusion decision.
Computing pairwise similarities across all clients is impractical due to its quadratic cost in the number of clients, as it would be a 2-dimensional matrix.
Instead, we first compute the mean of the clients' weights to obtain the aggregate weights $\aggModel$.
We then compute a one-dimensional similarity vector $C$, which contains the similarity between a client and this $\aggModel$, i.e., $C = \bigl(\cosSim(\aggModel, \model_i)\bigr)_{i\in\numClients}$, as defined in (\ref{eq:cosimilarity}).

\paragraph{Verifiability}
  Based on the commitments and openings from the previous step, the aggregator computes:
  \[
  	\aggModel_j = \sum_{i=1}^\numClients \model_{i,j}\,,\quad
  	\oAvg_j = \sum_{i=1}^\numClients \rand_{i,j}\,,\quad
  	\cAvg_j = \sum_{i=1}^\numClients \com_{i,j}\,.
  \]

  Furthermore, the aggregator computes commitments $\com_i^\ip$ to the inner products $\ip_i=\sum_{j\in \dimModel}\aggModel_j \model_{i,j}$ as well as $\com_i^{\ip^2}$ to $\ip_i^2$, each using fresh randomness $\rand_i^{\ip}, \rand_i^{\ip^2}$, respectively.
  Moreover, it computes a commitment $\cAvg^{\mathrm{Norm}^2}$ to the norm of the average model $\aggModel$.
  Finally, it also computes the values $\drand_i$, $\drand_i'$ and $\drand$ which express the difference between $\com_i^\ip$, $\com_i^{\ip^2}$ and $\cAvg^{\mathrm{Norm}^2}$, respectively, when expressed as fresh commitments and when they are computed from commitments to other elements.

  All commitments are made available to the verifier (except for $\cAvg_j$ which the verifier can directly re-compute themselves), and the following zero-knowledge proof of knowledge, showing the correctness of all inner product computations, is computed as follows:

\begin{align*}
&\NIZK^{\ip} \leftarrow
\mathsf{\ZKP}\bigg[ \\
&\Big(
(\model_{i,j}, \rand_{i,j}), (\aggModel_j, \oAvg_j),
(\ip_i, \rand_i^{\ip}), (\ip_i^2, \rand_i^{\ip^2}),
\drand, \drand_i, \drand_i^\prime
\Big) :\\
    &\bigwedge_{j\in [\dimModel]} \cAvg_j = \COM(\aggModel_j, \oAvg_j) \; \land \\
    &\bigwedge_{i\in [\numClients]} \left(\com_i^{\ip} = \COM(\ip_i,  \rand_i^\ip)\; \land\;
     \com_i^{\ip} = \sum_{j\in[\dimModel]} \aggModel_j \com_{i,j} + \drand_i\cdot H \right)\land  \\
    &\bigwedge_{i\in [\numClients]} \left(\com_i^{\ip^2} = \COM(\ip_i^2, \rand_i^{\ip^2}) \; \land \;
     \com_i^{\ip^2} = \com_i^{\ip} \cdot \ip_i + H \cdot \drand_i^\prime \; \right)\land \\
    &\cAvg^{\mathrm{Norm}^2} = \sum_{j\in[\dimModel]} \aggModel_j\cAvg_j + H \cdot \Delta \bigg]
\end{align*}

Here and in the following, to ensure completeness -- i.e., preventing a malicious aggregator from intentionally omitting valid inputs -- the verifier, as part of proof verification, also validates the proof goal itself, ensuring that all validly signed commitments for the specific context and training round are included in the zero-knowledge proof computation.
Consequently, the aggregator cannot omit, replace, or add client inputs without invalidating the proof.
Together with the context-specific linkability mechanism from input authentication, this also prevents a participant from contributing multiple inputs to the same training round.

In order to enable these checks, we assume the existence of an unambiguous round-finalization mechanism. 
Once a round is finalized, the set of inputs for that round consists of all validly authenticated submissions published before the corresponding deadline.

\subsection{Clustering and Inclusion Decision}
In order to now cluster the clients into benign and malicious, the aggregator computes a threshold $\THD$ that partitions the similarity vector $C$ into two sets. Clients are then classified according to this partition: inputs with cosine similarity at least $\THD$ are considered benign, whereas values below $\THD$ are considered malicious.

While in a very basic version, $\THD$ could be thought of as a fixed value, setting $\THD=\text{mean}(C)$ allows for basic adjustments (e.g., to account for increasing similarities over multiple training rounds).

\medskip

Here and in the following, we assume $\THD$ to be a public decision parameter, for example specified in a public aggregation policy.
For ease of exposition, we assume that $2^\precision \THD \in \mathbb{N}$, so that all subsequent bounds are expressed over the integers.

For dynamically computed thresholds, this condition might not be satisfied directly.
In this case, we round the threshold to the nearest value on the fixed-point grid, i.e., ${\THD'}=2^{-\precision}\left\lfloor 2^\precision\THD \right\rceil$, which introduces an absolute approximation error of at most $\left| \THD - {\THD'} \right| \leq 2^{-\precision-1}$.
All subsequent comparisons are then performed with respect to ${\THD'}$.
Consequently, the approximation can affect the classification only of inputs whose similarity lies within $2^{-\precision-1}$ of the original threshold, which can be considered negligible in practice.

\paragraph{Verifiability}
  While conceptually simple, the above logic of labeling clients with sufficiently large cosine similarity as benign is non-trivial in the cryptographic domain.
  This is because no entity---including the clients themselves---should be able to later check whether or not their inputs were rejected or not.

  Therefore, the algorithm first computes, for accepted inputs, $\com_i^\THD$ as a commitment to $1$, for rejected inputs it is computed as a commitment to $0$, without disclosing the contained bit $b_i$.
  These values will later serve as means for setting the weight of the corresponding input to $0$ by multiplication with $b_i$.

  More specifically, we set $\com_i^\THD=\COM(1,\rand_i^\ip)$ if $\mathrm{cos\_sim}(\aggModel,\model_i)\geq\THD$;
  otherwise, we set $\com_i^\THD=\COM(0,\rand_i^\ip)$, for uniformly random elements $\rand_i^\ip$.

  For algebraic constraints (e.g., computation of integer divisions or square roots), we rewrite the requirement
  \[
    \mathrm{cos\_sim}\left(\aggModel,\model_i\right)
= \frac{\left\langle \aggModel, \model \right\rangle}{\left\|\aggModel\right\|_2 \cdot \left\|\model_i\right\|_2} \geq \THD
  \]
  to
  \begin{equation}\label{eq:threshold}
    \left\langle\aggModel, \model_i\right\rangle^2 - 2^{2\precision}\cdot\THD^2\cdot\left\|\aggModel\right\|_2^2  \geq 0 \,\land \, \left\langle\aggModel, \model_i\right\rangle  \geq 0\,,
  \end{equation}
  thereby also using the fact that all clients' inputs are normalized vectors of norm $2^\precision$ (up to the precision given by the fixed-point arithmetic).\ifsubmission%
	  \footnote{More precisely, the norm of the vectors lies in the interval $\left[2^{\precision}-\sqrt{\dimModel}/2, 2^{\precision}+\sqrt{\dimModel}/2\right]$.
	  Rounding the norm to exactly $2^{\precision}$ thus effectively evaluates the client models based on their fixed-point representation with precision $\precision$ bits.}
\fi
  \,Note that the second clause is necessary due to the performed squaring in the rewriting of the inequality.
  Also note that the required norms and inner products exactly correspond to the values hidden inside $\com_i^{ip},\com_i^{ip^2}$, and $\cAvg^{\mathrm{Norm}^2}$, respectively.

  The following zero-knowledge proof statement now ensures that the bit committed in $\com_i^\THD$ was indeed computed in the correct manner, where the first clause covers the case in which (\ref{eq:threshold}) was satisfied, while the last two clauses represent the case where either of the two conditions was violated.

\begin{flalign*}
& \NIZK^{\THD} \leftarrow
\mathsf{NIZK}\Big[
(\ip_i, \rand_i^{\ip}),
(\ip_i^2, \rand_i^{\ip^2}),
(B_i, \rand_i^B),
\rand_i^{\THD}
:
\\
& \big(
\com_i^{\ip^2} = \COM(\ip_i^2, \rand_i^{\ip^2})
\wedge
\com_i^{\THD} = \COM(1, \rand_i^{\THD})
\\
& \wedge\;
\COM(B_i, \rand_i^B)
=
\com_i^{\ip^2}
- \cAvg^{\mathrm{Norm}^2}
\cdot \THD^2 \cdot 2^{2\precision}
\\
& \wedge\;
\com_i^{\ip} = \COM(\ip_i, \rand_i^{\ip})
\wedge
\ip_i \geq 0
\wedge
B_i \geq 0
\\[0.8em]
& \boldsymbol{\vee}
\\[0.3em]
& \com_i^{\ip^2} = \COM(\ip_i^2, \rand_i^{\ip^2})
\\
& \wedge\;
\COM(B_i, \rand_i^B)
=
\com_i^{\ip^2}
- \cAvg^{\mathrm{Norm}^2}
\cdot \THD^2 \cdot 2^{2\precision}
\\
& \wedge\;
C_i^{\THD} = \COM(0, \rand_i^{\THD})
\wedge
B_i < 0
\\[0.8em]
& \boldsymbol{\vee}
\\[0.3em]
& \com_i^{\THD} = \COM(0, \rand_i^{\THD})
\wedge
\com_i^{\ip} = \COM(\ip_i, \rand_i^{\ip})
\wedge
\ip_i < 0
\big)
\Big]
\end{flalign*}

\paragraph{Extension 2: Semi-static Thresholds}
  The description above assumed a fully static threshold $\THD$ for cosine similarity;
  we show how to achieve $\THD=\text{mean}(C)$.

  This can be achieved by leveraging the already computed commitments $\com_i^{\ip}$, as
  \[
    \com'=\sum_{i\in[\numClients]}\com_i^\ip
  \]
  is a publicly computable commitment to the sum $s$ of the inner products.
  This can now be turned into a commitment $\com''$ to the average cosine similarity $u$ by dividing it by $\numClients\cdot\|\aggModel\|_2$.
  Considering the fixed-point arithmetics we are emulating to avoid zero-knowledge proofs over the reals, the correctness of this division can now be proven by showing that $s\approx u\cdot\numClients\cdot\|\aggModel\|_2$ (up to rounding errors), i.e., by proving that
$$
    u\cdot\numClients\cdot\|\aggModel\|_2 - s\in [-B,B] \text{ for }
B = \frac{N^2}{2}\left(2^\precision+\frac{\sqrt{\dimModel}}{2}\right)
$$
  using standard multiplication proofs.
  The bounds in the computation of $\prod_i^{\THD}$ can then be rewritten in a canonical way.

  The main additional costs of this step are dominated by a single two-sided interval proof, which is negligible compared to the remaining costs of the protocol.

Nevertheless, to obtain a more accurate threshold, the appendix presents an inter-means classification algorithm that remains ZK-friendly but is more expensive due to its iterative nature.
The clustering decision is made in the same way as before.

Additionally, we consider the degenerate case in which all clients are benign and therefore all should be included in the aggregation;
otherwise, the procedure could unnecessarily exclude up to half of benign clients.
Since the baseline method always produces a two-cluster partition, in the appendix we also detail a ZK-friendly method for determining whether a two-way split is necessary or whether we should instead retain a single cluster.

\subsection{Final Aggregation of Weights}
Once the benign cluster has been identified, we apply FedAvg for aggregating the weights, i.e., the final model is simply computed as the average of the benign inputs.

Nonetheless, as suggested by previous works~\cite{nguyen2022flame,campos2025flaegis}, robustness can be further improved by using a robust aggregation function to mitigate imperfect classification.
Accordingly, we consider an aggregation function that weights each client's contribution based on the distance between its similarity score and $1$, reducing the influence of residual malicious clients and the impact of misclassification.
However, many robust aggregation functions (e.g., the median) are expensive to verify. Therefore, we seek an aggregation method that is both lightweight and robust. To this end, we propose a similarity-weighted aggregation function based on the distance to $1$ in the similarity vector $C$. Specifically, each client $i$ is assigned a weight $b_i = \frac{1}{1 - c_i}$, and the aggregated model is computed as
\begin{equation}
\model = \sum_{i \in C_2} \frac{b_i}{b}\model_i,
\end{equation}
where $b = \sum_{i \in C_2} b_i$.
This weighting scheme down-weights clients with low similarity (smaller $c_i$), thereby limiting their influence, while assigning higher impact to clients whose updates are closer to $\aggModel$.

\paragraph{Verifiability}
  The algorithm now obliviously aggregates models close to the mean model, while distant models are not considered.
  That is, the $j^{\text{th}}$ parameter of the final model is computed as $\sum_{i\in[\numClients]}\gamma_i\cdot\model_{i,j}$, where $\gamma_i$ is equal to the product $\delta_i\ell_i$.
  Recall that as detailed in \cref{sec:phase_input_preparation}, $\ell_i$ is the norm of the $i^{\text{th}}$ client's model.
  The commitment $\com_i^{\alg{final}} $ to this value is computed accordingly.
  The correctness of this aggregation process is now shown via the following statement:

\begin{align*}
&\NIZK^{\agg'} \leftarrow
\mathsf{NIZK}\bigg[\Big(
(\delta_i, \rand_i^{\THD}), (\ell_i, \rand_i^{\ell}),
    (\gamma_i, \rand^\gamma_i),  \drand_i^{\prime \prime}, \drand_j^{\prime \prime  \prime} \Big)
 :\\
    &\bigwedge_{i\in[\numClients]} \com_i^{\THD} = \COM(\delta_i, \rand_i^\THD) \; &\land \\
    &\bigwedge_{i \in[\numClients]} \comNorm_{i} = \COM(\ell_i, \rand_i^{\ell}) \; &\land  \\
    &\bigwedge_{i \in[\numClients]} \com_i^{\alg{tmp}} = \delta_i\cdot\comNorm_{i} + H \cdot \drand^{\prime\prime}_i \; &\land \\
    &\bigwedge_{i \in[\numClients]} \com_i^{\alg{tmp}} = \COM(\gamma_i, \rand_i^\gamma) \; &\land\\
    &\bigwedge_{j\in[\dimModel]} \com_j^{\alg{final}} = \sum_{i \in [\numClients]} \gamma_i\cdot\com_{i,j} + H \cdot \drand^{\prime\prime\prime}_j
\bigg]
\end{align*}

The commitments $\com^{\alg{final}}_j$ obtained above contain the scaled sum of all selected model updates.
If the number of selected inputs, i.e., $|I|$ (cf. also \Cref{alg:main_algorithm}, Line \ref{alg-line:I}), were public, the aggregator could simply open these commitments and the clients could divide the resulting coordinates by $|I|$.
However, disclosing $|I|$ would also reveal the number of rejected inputs, which we aim to keep private.

\medskip

We therefore additionally commit to the cardinality of the selected set.
By the additive homomorphism of the commitment scheme, the verifier can compute
\[
  \comCard := \sum_{i\in[N]} \com^{\THD}_i = \COM\left(|I|,\sum_{i\in[N]}\rand^{\THD}_i\right).
\]
Thus, $\comCard$ is a publicly computable commitment to the number of selected inputs without revealing this number.

For each coordinate $j\in[d]$, let the sum of the correctly scaled selected models, i.e., the value committed to inside $\com_j^{\alg{final}}$, be
\[
  S_j := \sum_{i\in[N]} \gamma_i \model_{i,j}\,.
\]
The aggregator computes the rounded average
$
  z_j := \left\lfloor \frac{S_j}{|I|} \right\rceil
$
and publishes a commitment
\[
  \com^{\alg{avg}}_j := \COM(z_j,\rand^{\alg{avg}}_j).
\]
Correctness of the division requires to show that  $\left|z_j-{S_j}/{|I|}\right|\leq 1/2$, which is equivalent to:
\[
  1\leq |I|\leq N
  \qquad\text{and}\qquad
  -|I|\leq 2(|I|z_j-S_j)\leq |I|,
\]
Importantly, all these relations can be proven without revealing $|I|$.

More precisely, setting $m=|I|$, this can be shown by extending the above proof goal by the following statement:
\begin{align*}
  &\NIZK^{\alg{avg}}\gets\mathsf{NIZK}\Big[
  \left(m,\rand^{\alg{card}}),
    (S_j,\rand^{\alg{final}}_j),
    (z_j,\rand^{\alg{avg}}_j),
    (r_j,\rand^r_j)_{j\in[d]}\right):\\
  &\comCard=\COM(m,\rand^{\alg{card}})
  \ \land\ 
  m\in[1,N]\ \land\\
  &\bigwedge_{j\in[\dimModel]}
  \Big(
    \com^{\alg{final}}_j=\COM(S_j,\rand^{\alg{final}}_j)
    \ \land\
    \com^{\alg{avg}}_j=\COM(z_j,\rand^{\alg{avg}}_j)\ \land
\\
  &\com^r_j=\COM(r_j,\rand^r_j)
    \ \land\
    r_j=mz_j-S_j\ \land
\\
    & m+2r_j\in[0,2N]
    \ \land\
    m-2r_j\in[0,2N]
  \Big)
\Big].
\end{align*}

The aggregator then opens the commitments $\com^{\alg{avg}}_j$ to the clients.
The resulting vector is still scaled by $2^\precision$, which can be removed
locally using the public precision parameter.
Neither the selected set $I$ nor its cardinality $m$ is revealed.

The algorithm finally outputs $\NIZK^{\alg{agg}}=(\NIZK^{\alg{agg'}},\NIZK^{\alg{avg}})$.

\paragraph{Extension 3: Weighted aggregation}
  Again taking into consideration the fixed-point arithmetic we are emulating, we first prove that $b_i$ is the correct computation result, i.e., that $b_i=\lfloor \frac{1}{1-c_i}\rceil$, by showing that $|b_i(1-c_i)-1|$ is small, or, equivalently, that 
  $$
    b_i(1-c_i)-1 \in (-2^{-\precision}(1-c_i),2^{-\precision}(1-c_i)).
  $$
  
  After considering the scaling and the way that $c_i$ is computed, this results in committing to
  \[
  \hat{b}_i=2^\precision\frac{2^\precision}{2^\precision-\hat{c}_i}\quad\text{for}\quad\hat{c}_i=\frac{\langle\aggModel,\model_i\rangle}{\|\aggModel\|_2},
  \]
  and proving the corresponding precision, i.e., that
  \[
    \hat{b}_i(2^\precision-\hat{c}_i)\in\left(2^{2\precision}-(2^\precision-\hat{c}_i),2^{2\precision}+(2^\precision-\hat{c}_i)\right)\,.
  \]
  Subsequently, the divisor $\sum b_i$ can be computed as the sum of the commitments, and the final model can be computed by enriching the final aggregation statement by another multiplicative factor $\hat{b}_i$.

  \medskip
  
  Note that for simplicity, we did not explicitly model the boundary case $c_i=1$, in which the denominator vanishes and the computation aborts.
  This case requires the normalized client update to be exactly aligned with the reference model and is therefore negligible in practice for reasonable precisions and model sizes.
  Yet, if needed, this could trivially be considered by introducing a small public regularization offset $\varepsilon$ (e.g., $\varepsilon=2^{-16}$) and computing $b_i=\frac{1}{1-(c_i-\varepsilon)}$.

  \medskip

  We omit making the proof goal explicit as it follows the same structure as those before without providing additional insights.

  From a complexity point of view, we note that the costs are dominated by one two-sided range proof per client, which is relatively minor compared to the main proof goal when proving cosine similarity of the models.

\subsection{Security Analysis}
We now argue that the proposed verification layer provides the two security guarantees stated in Section~\ref{s:security_model}, namely aggregation soundness and public-transcript privacy, with respect to the scheme summarized in \Cref{alg:main_algorithm}.

For a fixed training round, let $\mathcal{S}$ denote the finalized set of validly authenticated client submissions.
The set $\mathcal{S}$ is determined by the bulletin-board and round-finalization mechanisms requested in~\Cref{sec:phase_input_preparation}.
The verifier constructs all proof statements from $\mathcal{S}$ and the public aggregation policy, including the session identifier, training round, threshold parameters, fixed-point precision, and all other context information.
Consequently, proofs generated for different rounds, sessions, or input sets cannot be combined.

The following theorem summarizes the security of the protocol.

\begin{theorem}[Aggregation soundness and public-transcript privacy]
\label{thm:security}
Assume that:
\begin{enumerate}
    \item the employed commitment scheme is computationally binding and
    hiding;
    \item the employed proof systems are complete, zero-knowledge, and
    knowledge-sound;
    \item the Fiat--Shamir transformation is secure in the random-oracle
    model;
    \item the input-authentication mechanism is unforgeable, context-bound, and prevents more than one valid submission by the same eligible client within a training round; and
    \item the bulletin board is append-only and provides an unambiguous finalized input set for every training round.
\end{enumerate}
Furthermore, assume either that the aggregator does not collude with a client in accepting an incorrectly normalized input, or that the input validation proofs $\NIZK^{\mathsf{Norm^2}}_i$ from
~\Cref{sec:phase_input_preparation} are enabled.

Then the above protocol satisfies aggregation soundness and public-transcript privacy as defined in \Cref{s:security_model}, except with negligible probability.
\end{theorem}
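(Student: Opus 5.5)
We split the argument along the two guarantees. For \emph{aggregation soundness}, we use a reduction to knowledge soundness and binding. Suppose a PPT aggregator outputs, for the finalized set $\mathcal{S}$ of some round, an accepting transcript $(\NIZK^{\ip},\NIZK^{\THD},\NIZK^{\agg})$ together with openings of $\com^{\alg{avg}}_j$ to values $z_j^*$ that differ from the output of \Cref{alg:main_algorithm} on $\mathcal{S}$.

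First, the assumptions on input authentication and the bulletin board fix the statement itself. The verifier rebuilds every statement from $\mathcal{S}$ and the public policy, and Fiat--Shamir challenges bind the full context. Hence the adversary cannot drop, add, duplicate or replay inputs without either forging authentication or breaking random-oracle soundness.

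Second, we run the knowledge extractors (rewinding in the ROM via the forking lemma, or straight-line if available). From each NIZK they extract witnesses: the $\model_{i,j},\rand_{i,j}$, the $\ip_i$ and $\ip_i^2$, the $B_i$, the bits $\delta_i$, the $\gamma_i$, $m$, $S_j$ and $z_j$. Because the same commitments ($\com_{i,j}$, $\com_i^{\ip}$, $\com_i^{\ip^2}$, $\cAvg^{\mathrm{Norm}^2}$, $\com_i^{\THD}$, $\com^{\alg{final}}_j$, $\comCard$, $\com^{\alg{avg}}_j$) recur across sub-proofs, binding forces the extracted values to agree between proofs. Otherwise we obtain two distinct openings, i.e.\ a discrete-log relation between $G$ and $H$.

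Third, we check semantics stage by stage. The linear relations with the $\drand$ terms give $\ip_i=\sum_j\aggModel_j\model_{i,j}$ with $\aggModel_j=\sum_i\model_{i,j}$, and similarly for the squares and $\|\aggModel\|^2$. The OR-proof $\NIZK^{\THD}$ forces $\delta_i=1$ exactly when \eqref{eq:threshold} holds. We argue this by case analysis on the three disjuncts: the first disjunct is satisfiable only if $\ip_i\ge 0$ and $B_i\ge0$, while the other two require one of the complementary conditions, and the committed bit differs between them. Then $\gamma_i=\delta_i\ell_i$, $S_j=\sum_i\gamma_i\model_{i,j}$, $m=\sum_i\delta_i=|I|$ (by homomorphism of $\comCard$), and the range constraints pin $z_j=\lfloor S_j/m\rceil$ uniquely (up to the tie convention).

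Throughout, we invoke the stated parameter choice, under which all integer expressions lie below $q/2$, so modular equalities lift to integer equalities. Input well-formedness (norm $\approx 2^\precision$) comes either from the honest aggregator check under the non-collusion assumption or from the extracted witnesses of $\NIZK^{\mathsf{Norm}^2}_i$. This gives the interval $[L,R]$ needed to justify the rewriting \eqref{eq:threshold}. Extensions 2 and 3 are handled identically, with the interval proofs fixing the rounded quotients. Any deviation therefore yields either a binding break, an extraction failure, or a forgery, each of which occurs with negligible probability.

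For \emph{public-transcript privacy}, we give a hybrid argument with a simulator that receives only the public inputs, the metadata and the released aggregate $(z_j)_j$. In the first hybrid, we replace all NIZKs by simulated proofs by programming the random oracle, which is justified by zero-knowledge. In the next hybrids, we replace every commitment not opened publicly (client commitments, $\com_i^{\ip}$, $\com_i^{\THD}$, $\comCard$, $\com^{\alg{final}}_j$, and so on) by commitments to $0$, one at a time, using hiding. The opened $\com^{\alg{avg}}_j$ are consistent with the given output, since the simulator knows $z_j$ and can commit to it. The final hybrid is independent of the updates, of the bits $\delta_i$, and of $m$. Clients not colluding with the aggregator see only the same transcript plus their own openings, so the argument carries over.

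We expect the main obstacle to be the soundness extraction across many composed Fiat--Shamir proofs. Rewinding for multiple sub-proofs and for the OR-composition can blow up the running time, and the consistency of extracted values across proofs must be guaranteed. We would first treat all sub-proofs as one conjunctive $\Sigma$-protocol compiled with a single challenge, so that a single forking extraction suffices. Failing that, we would assume a straight-line extractable (simulation-extractable) instantiation. A secondary delicate point is the precise fixed-point argument that the squared rewriting and the rounding of norms to $2^\precision$ implement the intended selection rule.
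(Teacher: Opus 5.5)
Your proposal is correct and follows essentially the same route as the paper's proof. Both bind the verifier's statement to the finalized input set via authentication and context-bound Fiat--Shamir, derive stage-by-stage correctness from knowledge soundness, commitment binding and the integer-range/parameter bounds, and obtain privacy from hiding plus joint zero-knowledge simulation; you additionally make explicit the extraction, the cross-proof consistency via binding, the OR-proof case analysis, the division check and the hybrid argument.

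One small overstatement, shared with the paper, is your claim that the interval proofs of Extensions 2 and 3 ``fix'' the rounded quotients. As written they only confine each quotient to a tolerance interval that can contain more than one admissible integer; for example, in Extension 3 both the floor and the ceiling of $2^{2\precision}/(2^{\precision}-\hat c_i)$ pass the open-interval check. Soundness there therefore holds only for the correspondingly relaxed rounding rule.
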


For the proof, we refer the reader to \Cref{app:proof_sec}.

\section{Experimental Analysis}\label{sec:Results}

We evaluate our pipeline under representative poisoning attacks and compare it against several robust FL defenses. We report two robustness metrics: (i) client-detection accuracy, defined as the fraction of clients correctly classified as benign or malicious for defenses that output client-level decisions, and (ii) final global-model accuracy, measured on benign clients. We also quantify public-verification overhead in terms of proof-generation time, verification time, and proof size.

\subsection{Settings}

The datasets and complete configuration details of our robust and verifiable FL scenario are provided in Appendix \ref{app:settings}. We implement the FL experiments using Flower \cite{flower}. The datasets used in this work are FEMNIST~\cite{caldas2018leaf}, Synthetic~\cite{caldas2018leaf}, and Sentiment140~\cite{go2009twitter}, all of which are classification datasets. FEMNIST is an image dataset for handwritten characters, Synthetic is an artificially generated tabular dataset with five classes, and Sentiment140 classifies tweet sentiment. For FEMNIST, we train a CNN for 30 rounds, one epoch per round, with 30 clients. For Synthetic, we train an MLP for 30 rounds, one epoch per round, with 20 clients. Finally, for sentiment140, we train an LSTM for 10 rounds, using one epoch per round, with 20 clients.

The attacks considered are label flipping, LIE, STATOPT, Mimic, Min-max, and Min-sum. Their description and selected parameters are provided in Appendix \ref{app:poison}. We consider four malicious-client detection methods (FLAegis \cite{campos2025flaegis}, Lomar \cite{li2021lomar}, SafeFL \cite{dou2025toward}, and M3D-FL \cite{atia2025m3d}) and two robust aggregation functions (median and Krum), covering a broad range of representative defenses for comparison. 
These baselines are not designed for public verifiability; nevertheless, we include them to assess the robustness-verifiability trade-off. In particular, they provide non-verifiable robustness references against which we compare the accuracy cost of restricting our pipeline to proof-friendly operations. Furthermore, Krum requires the expected number of Byzantine clients as an input, so we provide the true malicious-client count in each configuration.

Finally, in the experiments, we evaluate the core protocol only, namely fixed-threshold filtering and standard aggregation over the selected clients. The optional extensions are not used in the accuracy experiments; we report their estimated cryptographic cost separately.

\subsection{Results}
We now evaluate our ZK-friendly robust aggregation against the considered poisoning attacks and compare it with the baseline defenses. To assess the performance of each method, we report two metrics: \emph{detection accuracy}, i.e., the ability to label benign clients as benign and malicious clients as malicious, and \emph{final accuracy}, measured on benign clients only, which captures the accuracy achieved by the clients in their classification task. For each dataset we take attack-free FedAvg as the \textit{ideal accuracy} reference ($0.8312$ on FEMNIST, $0.92$ on Synthetic, and $0.80$ on Sentiment140). A robust method should remain as close as possible to this reference under attack. In the main body of the paper, we show only the results of the FEMNIST dataset, while the Synthetic and Sentiment140 results are reported in Appendix \ref{app:restresults}. Additionally, after the FEMNIST results, we provide a table that summarizes the results across all three datasets. 

\subsubsection{Performance of the detection method}

\paragraph{Detection accuracy}
As shown in Figure~\ref{fig:femnist_comparison_det}, our detector remains highly effective across attacks and malicious-client ratios, and is the only method that does not collapse in any evaluated configuration. Detection is $1.0$ under Label~Flipping, LIE, STATOPT, Mimic, and Min-max, while Min-sum is the only attack showing degradation, from $\approx0.91$ at $10\%$ to $\approx0.81$ at $40\%$. The baselines are more attack-dependent. Lomar performs nearly perfectly under Mimic, Min-max, and Min-sum, but degrades under LIE ($\approx0.50$--$0.72$) and STATOPT. M3D-FL is relatively consistent, although detection generally remains between $0.6$ and $0.9$ and decreases as the malicious fraction grows. SafeFL performs well only in specific regimes, collapsing under LIE and Mimic and reaching only $\approx0.4$--$0.55$ under Min-max/Min-sum. Thus, baselines that outperform our detector in isolated configurations exhibit larger degradation in other settings. Our previous non-ZK-friendly method, FLAegis \cite{campos2025flaegis}, is the strongest competitor, achieving near-perfect detection under Label~Flipping and LIE and remaining strong under STATOPT. However, it degrades under Mimic and reaches only $\approx0.72$ at $10\%$ for Min-sum and Min-max. These detection errors are partly mitigated by its aggregation stage, resulting in stable final-model accuracy.

\paragraph{Final accuracy}
Figure~\ref{fig:femnist_comparison_acc} shows that final accuracy closely follows detection performance. Our method remains near the ideal FedAvg accuracy ($0.8312$) across all attacks and malicious ratios, typically around $0.81$--$0.83$, with low variance and no catastrophic failures. In contrast, M3D-FL collapses under LIE and STATOPT and strongly degrades under Mimic and Min-max. SafeFL performs well under Label~Flipping and STATOPT but drops to approximately $0.07$--$0.15$ under LIE, Mimic, and Min-max. Lomar is competitive under Mimic, Min-max, and Min-sum, but degrades under STATOPT and collapses under LIE. FLAegis remains the most stable baseline~\cite{campos2025flaegis}. Median and Krum also show stable results, generally around $0.78$--$0.81$, although they rarely outperform our method. Moreover, Krum requires \emph{a priori} knowledge of the number of Byzantine clients.

\begin{figure*}[!t]
\centering
\includegraphics[width=0.92\linewidth]{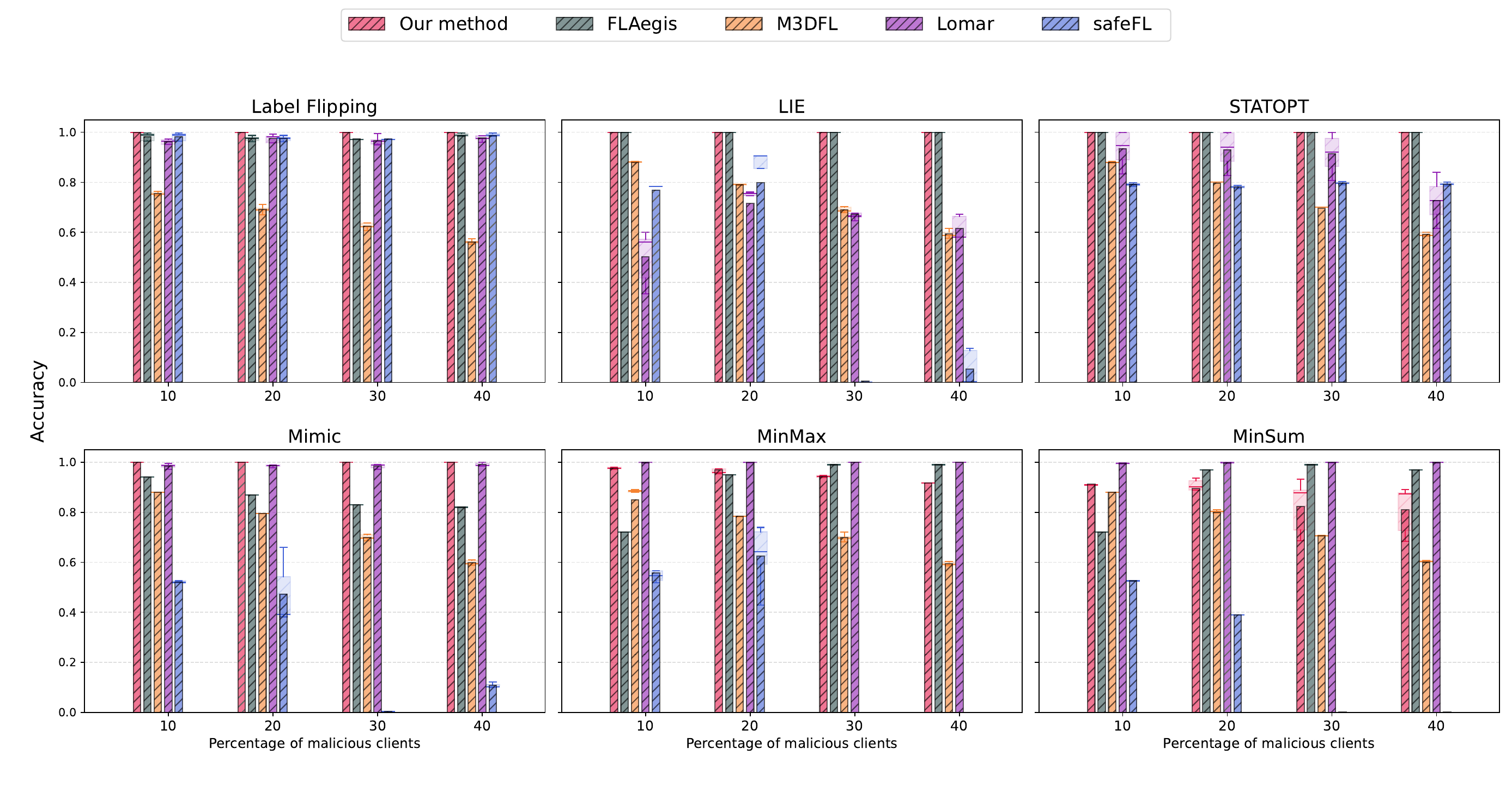}
\vspace*{-2.5em}
\caption{Detection accuracy of the different methods against the different poisoning attacks with different percentages for FEMNIST dataset.}
\label{fig:femnist_comparison_det}
\end{figure*}

\begin{figure*}[!t]
\centering
\includegraphics[width=0.92\linewidth]{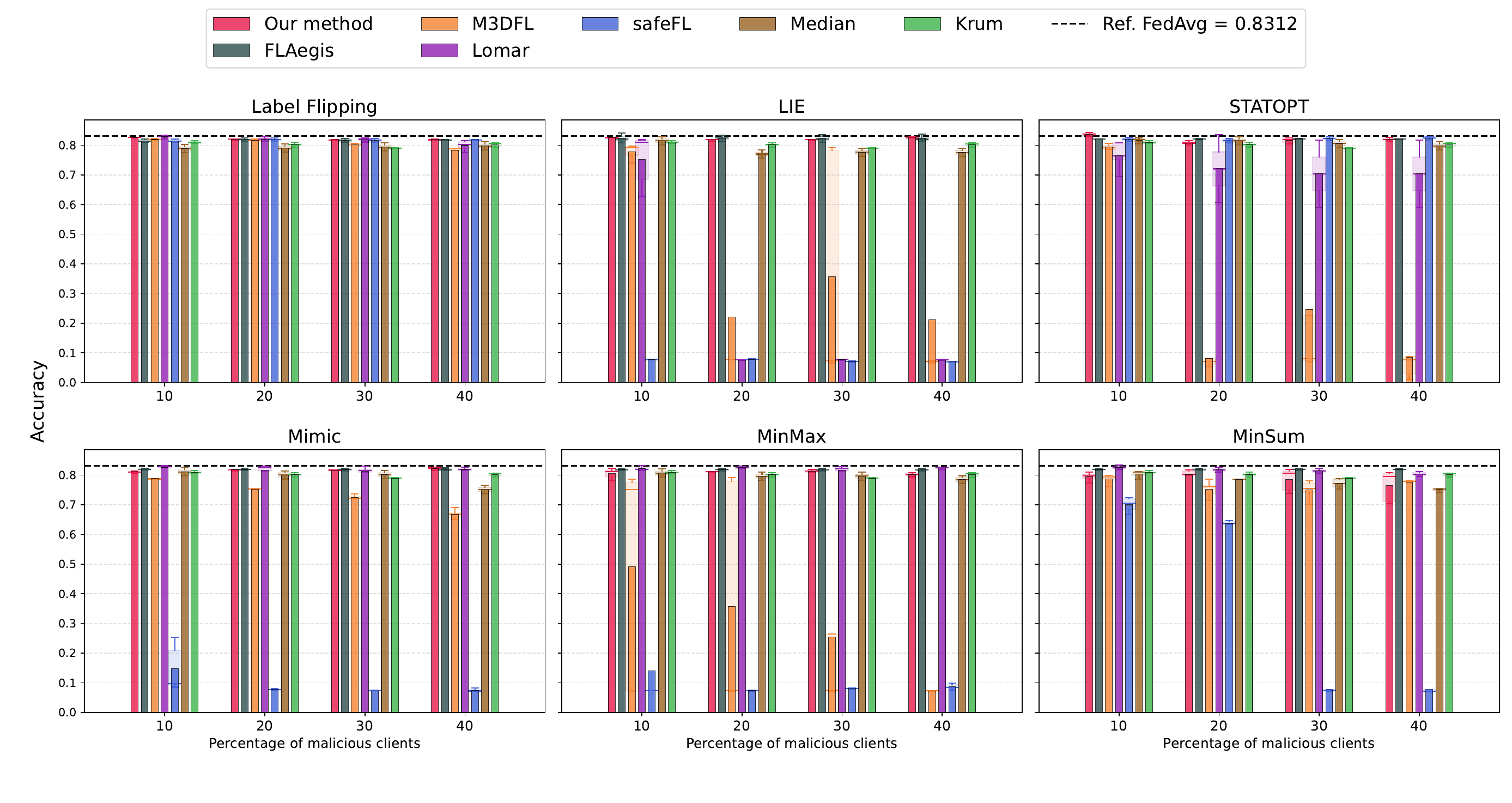}
\vspace*{-2.5em}
\caption{Final accuracy of the different methods against the different poisoning attacks with different percentages for FEMNIST dataset.}
\label{fig:femnist_comparison_acc}
\end{figure*}

\paragraph{Summary}
The remaining datasets, reported in Appendix \ref{app:restresults}, exhibit the same qualitative trends as \Cref{fig:femnist_comparison_acc} and \Cref{fig:femnist_comparison_det}. Although our method does not dominate every individual configuration, two aspects are important when interpreting these results.

First, our method is explicitly designed for verifiability. Its detection and aggregation rules are restricted to ZK-friendly linear and comparison operations so that every decision can be efficiently proven. In contrast, competing defenses may use more expressive operations, such as density-based clustering, spectral decompositions, or sorting-based statistics, which are considerably harder to verify in zero knowledge. Hence, the comparison deliberately places our method under a stricter computational constraint.

Second, excluding FLAegis, which represents our previous non-ZK-friendly work, our method provides the most consistent robustness across attacks. Other baselines may outperform it in particular configurations, but each suffers substantial degradation under at least one attack, whereas our method avoids catastrophic failures.

We quantify this behavior using \textit{Accuracy Degradation} (AD):
$\mathrm{AD} = Acc_{\text{ideal}} - Acc_{\text{obtained}}$, where $Acc_{\text{ideal}}$ denotes the no-attack FedAvg accuracy and $Acc_{\text{obtained}}$ the accuracy achieved under attack. Higher AD therefore indicates larger accuracy loss under attack. \Cref{tab:comparison} reports the AD averaged over attacks, malicious-client ratios and the three datasets.

FLAegis obtains the lowest average AD~\cite{campos2025flaegis}, but relies on operations that are impractical to verify in zero knowledge. Excluding FLAegis, our ZK-friendly method achieves the lowest AD among the evaluated defenses, outperforming the non-verifiable baselines. Median and Krum remain competitive but weaker, while detector-based approaches suffer substantially higher AD due to failures on specific attacks. In summary, these results show that strong robustness can be retained while restricting the aggregation pipeline to operations suitable for efficient zero-knowledge verification.

\begin{table}[t]
\centering
\small
\setlength{\tabcolsep}{4pt}
\renewcommand{\arraystretch}{0.85}
\begin{tabular}{lcc}
\hline
\textbf{Method} & \textbf{Mean AD} & \textbf{Std. dev.} \\ \hline
Our method & 0.0267 & 0.0086 \\
FLAegis    & 0.0182 & 0.0042 \\
M3D-FL     & 0.1952 & 0.1228 \\
Lomar      & 0.1041 & 0.1871 \\
SafeFL     & 0.2187 & 0.2186 \\
Median     & 0.0282 & 0.0060 \\
Krum       & 0.0414 & 0.0183 \\ \hline
\end{tabular}
\caption{Average accuracy degradation (AD) across attacks, malicious-client ratios, and the three datasets.}
\label{tab:comparison}
\vspace{-1.0em}
\end{table}

\subsubsection{Proof of concept}

We implemented a proof of concept of the scheme explained in \Cref{sec:robust_verif_fl} in Python relying on two cryptographic libraries:
\texttt{petlib}, which provides efficient primitives for elliptic-curve operations~\cite{petlib}, and \texttt{zksk}, which enables the construction of zero-knowledge proofs~\cite{lueks2019zksk}.
The implementation is based on elliptic-curve cryptography and uses the NIST/SECG 224-bit prime-field curve (\texttt{secp224r1}), offering an estimated 112-bit security level. All benchmarks were executed over 20 runs.
All the experiments were run in a virtual machine equipped with an Intel(R) Core(TM) Ultra 7 165H CPU (1.40 GHz) and 32~GB of RAM.


\begin{figure}[!t]
    \centering
    \begin{subfigure}{0.49\textwidth}
        \includegraphics[width=\linewidth]{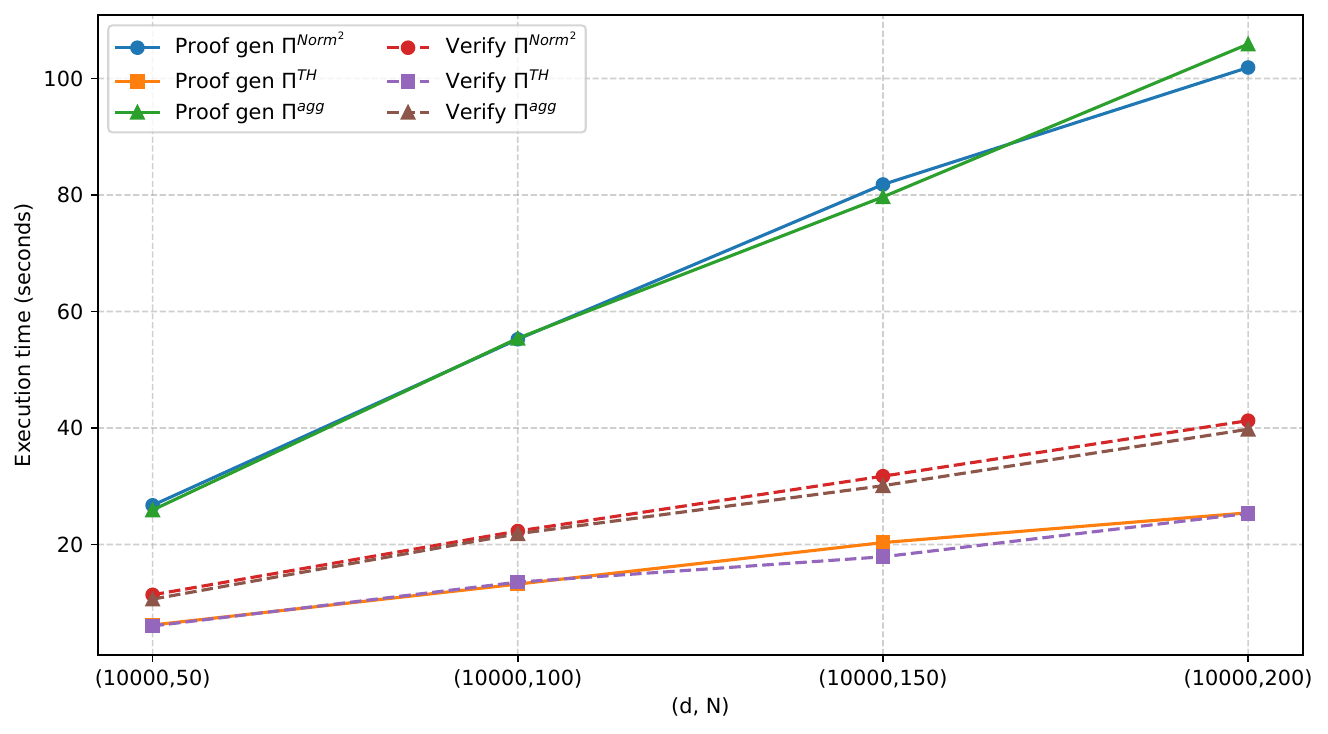}
    \end{subfigure}
    \hfill
    \begin{subfigure}{0.49\textwidth}
        \includegraphics[width=\linewidth]{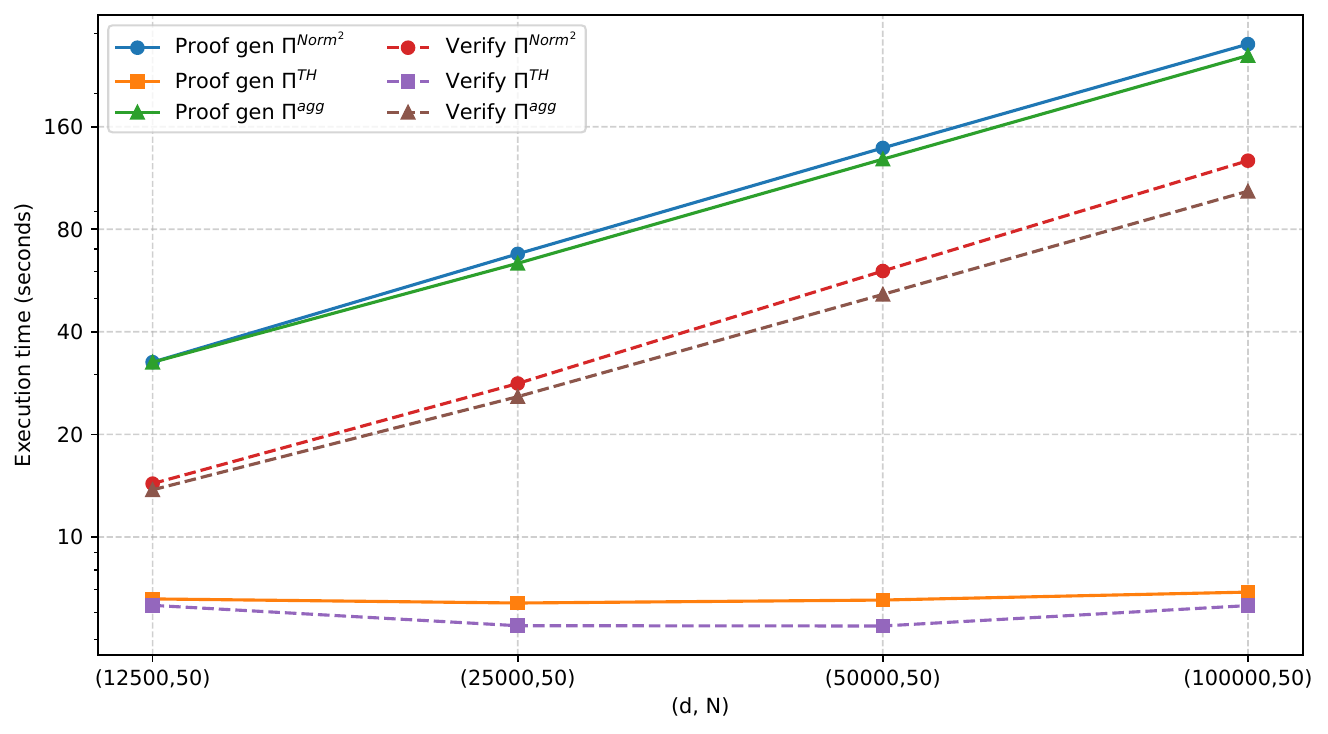}
    \end{subfigure}
\vspace*{-2.0em}
    \caption{Proof generation and verification times for 4 sets of parameters with a fixed model dimension $\dimModel$ (top) and a fixed number of clients $\numClients$ (bottom).}
    \label{fig:benchmark_time}
\end{figure}

\begin{figure}[!t]
    \centering
    \begin{subfigure}{0.49\textwidth}
        \includegraphics[width=\linewidth]{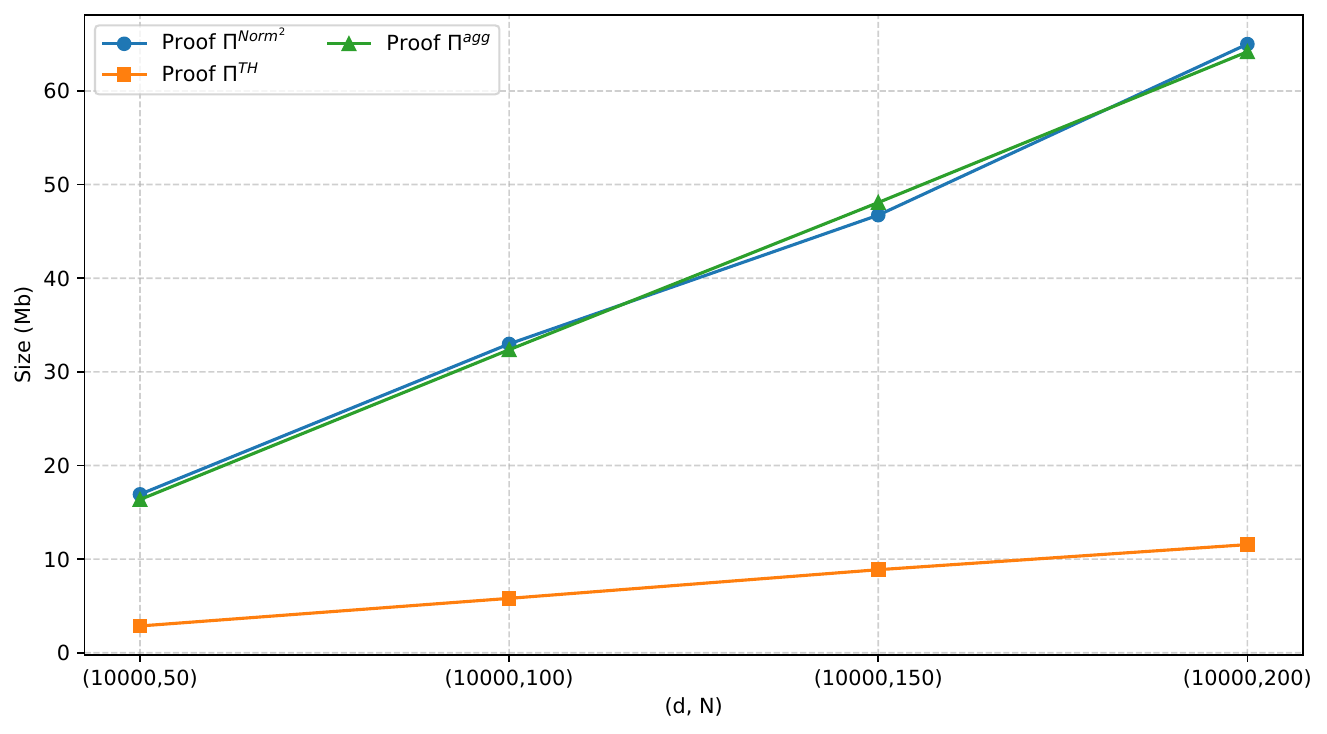}
    \end{subfigure}
    \hfill
    \begin{subfigure}{0.49\textwidth}
        \includegraphics[width=\linewidth]{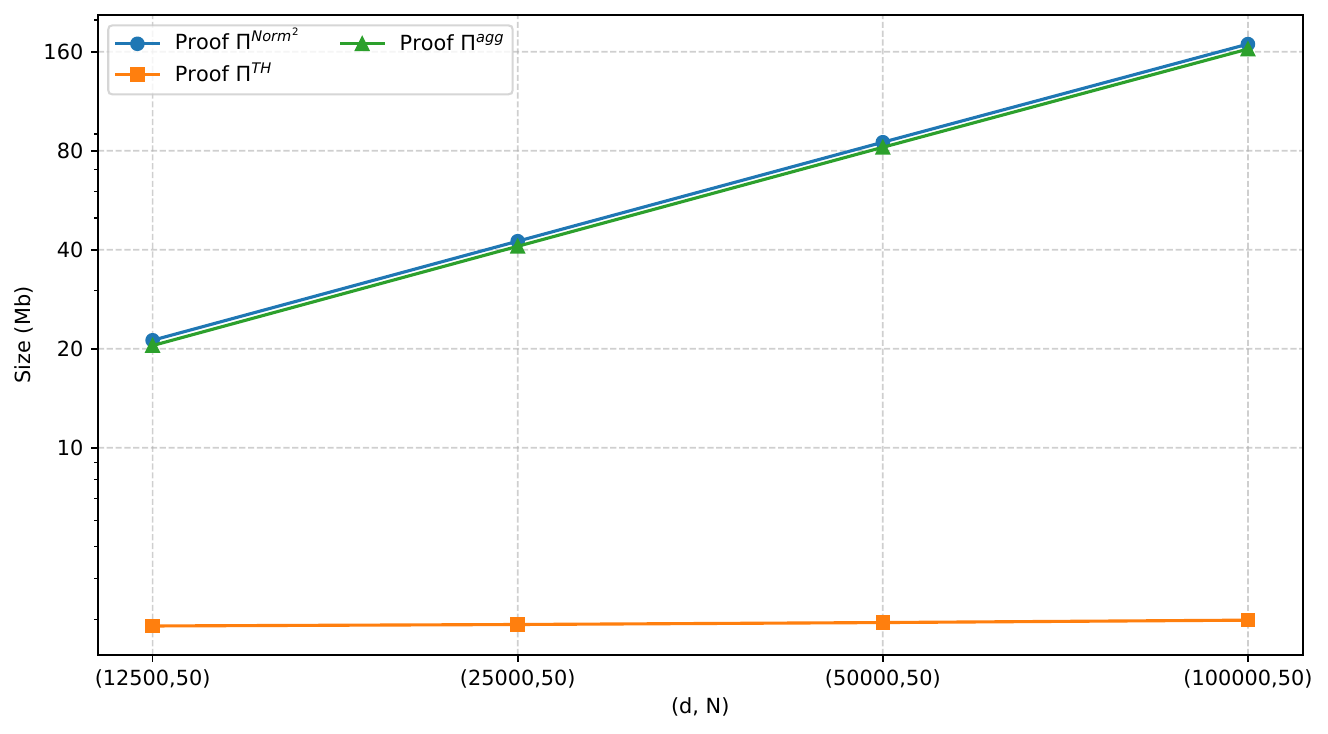}
    \end{subfigure}
\vspace*{-2.0em}
    \caption{Proof size for 4 sets of parameters with a fixed model dimension $\dimModel$ (top) and a fixed number of clients $\numClients$ (bottom).}
    \label{fig:benchmark_size}
\end{figure}

\Cref{fig:benchmark_time} and \Cref{fig:benchmark_size} report the performance of our verifiable aggregator implementation for four sets of parameters $(\dimModel,\numClients)$ where $\dimModel$ is the size of the models and $\numClients$ is the number of clients.
They respectively exhibit the proof generation/verification time and the proof size of the different NIZK components.
In both figures, we have fixed $\dimModel$ on the left and  $\numClients$ on the right.
We also have set the threshold to $0.903$ and the precision parameter $\lambda$ to $23$.
The theoretical size of the proofs $\NIZK^{\ip}$ and $\NIZK^{\agg}$ is expected to be asymptotically equal to $O(\dimModel\numClients)$ elements.
In $\NIZK^{\THD}$, the proof size should be asymptotically equal to $O(\numClients\lambda)$.

Our implementation shows that proof generation and verification times have different scaling behaviors depending on the proof type.
Both $\NIZK^{\ip}$ and $\NIZK^{\agg}$ scale bilinearly with $\dimModel$ and $\numClients$.
The same trend is observable for the size of the respective proofs.
In contrast, $\NIZK^{\THD}$ is largely independent of $\dimModel$ but grows linearly with $\numClients$, which is consistent with the fact that threshold proofs are generated independently for each participant

\subsubsection{Estimation of extensions efficiency}\label{sec:estimation}
In \cref{s:cost_estimates}, we provide a modular computational cost estimation for the main protocol and the introduced extensions. For typical FL parameter regimes, the overall cost is largely determined by the main protocol, while the extensions introduce additional modular overhead for the proof generated by the aggregator.
Furthermore, we there also give an estimation of the proving and verification times for different parameter sets.
While the costs of Extensions 2 and 3 are practically neglectable compared to the main protocol, validating the inputs may impose a significant verification overhead in case of a high number of clients.


\subsubsection{Parameter Estimation}\label{sec:discussion}

When instantiating the proposed protocol, the fixed-point precision must be chosen together with the group order.
All computations are carried out modulo the prime group order $q$.
To ensure that the resulting modular relations correspond uniquely to
the intended relations over the integers, every intermediate integer
value occurring in the proof statements must have absolute value
strictly smaller than $q/2$.

The largest intermediate value arises in the clustering step, namely
from the scalar expression represented by
$
  \bar{\com}^{\mathrm{Norm}^2}\cdot \THD^2\cdot 2^{2\precision}.
$
Its absolute value is upper-bounded by
$
  \dimModel^2\cdot \numClients^2\cdot 2^{8\precision}.
$
Consequently, the parameters must satisfy
$
  \dimModel^2\cdot \numClients^2\cdot 2^{8\precision} < q/2.
$

For example, consider $\numClients=2^{10}\approx 1\,000$, $\dimModel=2^{20}\approx 1\,000\,000$, and a group whose prime order $q$ has approximately $256$ bits.
The above requirement then yields
$$
  2^{2\cdot 20}\cdot
  2^{2\cdot 10}\cdot
  2^{8\precision}
  < q/2,
$$
and therefore permits $\precision= 24$.

Note that this bound only limits the fractional precision of the fixed-point encoding, not the precision of the corresponding integer operations.
Once the inputs are encoded, additions and multiplications are performed exactly on their integer representations since, by parameter selection, no wraparounds can occur.
Thus, these operations introduce no additional approximation error.
Additional rounding errors arise only at explicitly identified rescaling, division, normalization, or decoding steps.

\subsubsection{Potential Tradeoffs}

Our approach admits several trade-offs between privacy, efficiency, and robustness.
A first option is to trade privacy for efficiency. Our baseline design reveals no information about individual weight vectors beyond what can be inferred from the final aggregate. However, publishing intermediate values, such as the average weight vector, could enable more efficient techniques, including Pedersen vector commitments and inner-product arguments~\cite{inner_product_arguments}. This reduces bulletin-board and proof overhead, but increases information leakage and may facilitate inference attacks. Hence, the privacy impact of exposing intermediate aggregates must be carefully evaluated.

Similarly, revealing which clients were excluded as outliers simplifies verification by avoiding oblivious selection proofs. However, this exposes participation-level metadata, such as which clients were rejected and when, which may itself be sensitive.

The choice of commitment and zero-knowledge primitives also introduces efficiency trade-offs. Different schemes can realize the same functionality while favoring smaller proofs~\cite{BunzBulletproofs2018} or lower proving and verification time~\cite{plonk}.

Finally, the robustness layer involves a trade-off between statistical sophistication and verifiability. Methods such as coordinate-wise median, Krum, or geometric-median aggregation rely on sorting, pairwise distances, or iterative optimization, resulting in substantially larger verification statements. Our design therefore favors outlier-removal mechanisms based on linear relations and simple comparisons, which can be verified efficiently at the cost of avoiding more complex robust aggregation rules.

\section{Conclusions and Future Work}\label{sec:Conclusions}

In this work, we introduce a fully verifiable and robust FL aggregation method that achieves performance comparable to leading non-verifiable defenses. Although verifiability constrains the complexity of robust aggregation, our experiments show that the proposed method withstands a broad range of sophisticated poisoning attacks with only a limited reduction in model accuracy. The additional verification overhead may be acceptable in many cross-silo FL scenarios: at the evaluated scales, proofs can be generated and verified within minutes and require hundreds of megabytes of storage. This trade-off is particularly reasonable in sensitive applications with long data-acquisition and training cycles. For instance, in collaborative medical imaging involving MRI or CT data, data acquisition and curation can take substantially longer than a single aggregation step. In such settings, additional minutes to hours of cryptographic computation may be acceptable in exchange for stronger integrity and auditability guarantees.

Our study also suggests several directions for future work. First, the robustness of the aggregation could be further improved while maintaining full verifiability. As discussed in Section 4, several approaches could be explored, although they may significantly affect performance. In particular, aggregation methods requiring a non-constant number of steps, such as dynamically selecting the number of clusters, remain challenging to verify efficiently. Second, a natural cryptographic direction is to investigate a quantum-safe variant based on post-quantum cryptography and assess its cost. This is arguably not an urgent concern because our solution provides unconditional privacy, even against a large-scale, fault-tolerant quantum computer. Quantum adversaries could instead compromise the binding property of commitments or the soundness of zero-knowledge proofs, but would need access to a quantum computer at the time of aggregation. Thus, store-now-decrypt-later attacks do not apply to our method.

Finally, an interesting extension is a multi-aggregator setting, in which aggregation is performed by multiple aggregators while preserving the same verifiability requirements.

%

\begin{acks}
Funded by the EU CHIST-ERA initiative (PCI2023-145989-2 funded by MICIU/AEI/10.13039/501100011033, Austrian Science Fund (FWF): I6650-N, and by the European Union NextGenerationEU/PRTR). And by the European Union under the European Defence Fund under grant agreement No.~101121403 -- NEWSROOM, under the Horizon Europe Research and Innovation programme under grant agreement No.~101168311 -- LICORICE). Views and opinions expressed are however those of the author(s) only and do not necessarily reflect those of the European Union or the European Commission. Neither the European Union nor the granting authority can be held responsible for them.
\end{acks}

\bibliographystyle{ACM-Reference-Format}
\bibliography{biblio.bib}


\clearpage
\appendix
 \section*{Open Science}
 We are committed to openly sharing the research artifacts associated
with this work to support transparency and reproducibility. To facilitate reuse, we split our artifacts into two repositories: (i) the code used to evaluate robust aggregation (along with the datasets used to train the ML models) is available at \url{https://anonymous.4open.science/r/robust-fl-aggregation-5465};
and (ii) the implementation of the verifiability components, together with the corresponding benchmarks, is available at \url{https://anonymous.4open.science/r/robust-fl-verification-pipeline-33C8}.
%
%

\section{Security proof (\Cref{thm:security})}\label{app:proof_sec}
\begin{proof}
The theorem follows from the following three observations.

\paragraph{Binding to the finalized input set}
Every client submission contains commitments that are authenticated together with the complete session and round context.
The verifier reconstructs the public proof statement from the complete finalized set $\mathcal{S}$.
Therefore, an accepting proof necessarily refers to exactly the authenticated submissions contained in $\mathcal{S}$.

More precisely, the aggregator cannot replace a commitment without breaking the authentication mechanism, and it cannot open an authenticated commitment to a different value without breaking the binding property of the commitment scheme.
Similarly, omitting or adding a submission changes the public statement constructed by the verifier and therefore invalidates the corresponding Fiat-Shamir challenge.
Context binding additionally prevents submissions or proof components from being replayed across sessions or training rounds.

If the input validation extension is enabled, knowledge soundness of $\NIZK^{\mathrm{Norm}^2_i}$ additionally guarantees that every accepted client vector satisfies the coefficient and normalization constraints required by the aggregation procedure.
Without this extension, the same property follows from the stated non-collusion assumption between the submitting client and the aggregator.

\paragraph{Correctness of the verified computation}
Consider an accepting transcript for the proof components $\NIZK^{\mathsf{ip}}$, $\NIZK^{\mathsf{TH}}$, and $\NIZK^{\mathsf{agg}}$.
First, the relations in $\NIZK^{\mathsf{ip}}$ bind the reference-model coordinates, the client-specific inner products, their squares, and the squared norm of the reference model to the authenticated client commitments.
Thus, the committed similarity-related values are exactly those obtained from the finalized client inputs.

Second, the relations in $\NIZK^{\mathsf{TH}}$ guarantee that each commitment $\com^{\THD}_i$ contains the bit $\delta_i \in \{0,1\}$ prescribed by the threshold comparison.
In particular, $\delta_i=1$ if and only if the corresponding encoded similarity satisfies the public inclusion condition, and $\delta_i=0$ otherwise.
The range restrictions and the parameter bounds from \Cref{sec:discussion} ensure that these relations have their intended integer interpretation and cannot be satisfied through modular wraparound.

Finally, the relations in $\NIZK^{\mathsf{agg}}$ bind the output commitments to the aggregation of exactly those client updates for which $\delta_i=1$.
The accompanying cardinality and division proofs guarantee that the released coordinates are the correctly rounded average of the selected updates without revealing the number of selected clients.
For the weighted extension, the corresponding proof relations additionally bind the aggregation coefficients to the prescribed similarity-dependent weights.

It follows that an accepting transcript implies that the released model is the result of applying the publicly specified aggregation procedure to the complete finalized input set $\mathcal{S}$.
Hence, an aggregator that produces an accepting proof for an inconsistent output would either violate knowledge soundness, open a commitment inconsistently, forge an authenticated submission, or violate the binding of the Fiat--Shamir transcript to its public statement.
Under the assumptions of the theorem, each of these events occurs only with negligible probability.

\paragraph{Public-transcript privacy}
The privacy claim concerns public verifiers and clients that do not obtain the openings of other clients' commitments.
The aggregator is outside this privacy boundary because it receives the client updates and commitment openings in plaintext and therefore knows the intermediate values and selection decisions.

Pedersen commitments are perfectly hiding, and thus the publicly posted commitments reveal no information about their committed values.
Moreover, by zero knowledge, each proof component can be simulated from its public statement without access to the corresponding witnesses.
Since the proof components are composed over the same public statement and are bound to the same session and round context, their simulators can be invoked jointly to simulate the complete public proof transcript.

Consequently, the public transcript reveals no information about individual client updates, client-level inclusion decisions, or the cardinality of the selected set beyond what is already implied by the public parameters, the authenticated submission metadata, and the released aggregate.
\end{proof}
 \section{Extensions of the Aggregation Algorithms}
 \label{app:extensions}

\subsection{Dynamic threshold}


As stated previously in the main body of this manuscript, we develop an adaptive threshold that adapts to the distribution of $C$, at the expense of additional verification cost.
To improve later cluster separability, we apply a cubic transformation to these scores.
This mapping compresses values toward zero while preserving sign.
As benign updates typically have cosine similarity close to the mean, as they are more, i.e., close to 1, their scores shrink less than those of misaligned (malicious) updates, which are closer to 0, thereby increasing the contrast between the two groups.
We choose the exponent \(3\) as a compromise: it preserves negativity for \(c_i < 0\), while larger exponents were too aggressive, driving most scores close to zero and reducing useful resolution.

We describe our method in \Cref{alg:threshold_selection}.
We use inter-means classification, i.e., one-dimensional $K$-means~\cite{lloyd1982least} with $K=2$.
The procedure initializes $T_0 = mean(C)$, and then partitions $C$ into two sets: $C_1 = \{c_i\in C | c_i <T_0\}$ and $C_2 = \{c_i\in C | c_i \geq T_0\}$.
The threshold is updated as the midpoint between the two cluster means: $T_{1} = \frac{mean(C_1) + mean(C_2)}{2}$.
We iterate until convergence, i.e., $|T_{r+1} - T_r| < \epsilon$, and select the final value $T_r$ as the dynamic threshold.

\begin{algorithm}
\caption{\textsc{Threshold\_Selection}$(C,\epsilon)$}
\label{alg:threshold_selection}
\begin{algorithmic}[1]
\REQUIRE Similarities vector $C$, convergence threshold $\epsilon$
\ENSURE Threshold $T$
    \STATE $C \gets (c_k^3)_{k\in\numClients}$

    \STATE $T_0 \gets \text{mean}(C)$
    \STATE $r \gets 0$
    \REPEAT
        \STATE $C_1 \gets \{c \in C \mid c < T_r\}$
        \STATE $C_2 \gets \{c \in C \mid c \ge T_r\}$
        \STATE $T_{r+1} \gets \dfrac{\text{mean}(C_1)+\text{mean}(C_2)}{2}$
        \STATE $r \gets r+1$
    \UNTIL{$|T_{r}-T_{r-1}|<\epsilon$}
    \STATE $T \gets T_r$

\RETURN $T$
\end{algorithmic}
\end{algorithm}


\subsection{Final clustering decision}
The inter-means classification described above always partitions $C$ into two clusters.
However, when no malicious clients are present, we would like to retain all clients in order to avoid discarding useful information.
In our experiments, we accept the two-cluster partition.
Nevertheless, to handle the general case, we design a verifiable procedure to decide whether it is appropriate to use $K=2$ clusters or to keep a single cluster ($K=1$), as shown in \Cref{alg:dynamic_clustering}.

We first define the following quantities: $\Delta_i$ denotes the differences between consecutive points, $\Delta_{\max}=\max_i(\Delta_i)$, and $\Delta_{\text{mean}}=\text{mean}(\Delta_i)$.
For cluster $i$, let $\mu_i$ and $\sigma_i$ be its mean and standard deviation, respectively. We then decide between $K\in{1,2}$ using the conditions

\begin{equation}
K=2 \Leftrightarrow
\begin{cases}
\dfrac{\Delta_{\max}}{\Delta_{\text{mean}}}\geq k_1, \\
\dfrac{|\mu_1 - \mu_2|}{\max(\sigma_1,\sigma_2)}\geq k_2,\\
|\mu_1 - \mu_2|\geq 0.02.
\end{cases}
\end{equation}

If all inequalities are accomplished, we set $K=2$.
The constant $0.02$ in the third inequality corresponds to $1\%$ of the total length of the value space.

In practice, $C_1$ may occasionally contain a distant outlier, which inflates $\Delta_{\text{mean}}$ and can cause the first condition to fail even when the two clusters are clearly separated by a pronounced valley.
To address this case, when any of the above inequalities is violated, we make a second condition.
In particular, we strengthen the first and third conditions and accept $K=2$ whenever
$K=2 \Leftrightarrow \left(\dfrac{\Delta_{\max}}{\Delta_{\text{mean}}}\geq k_3 > k_1 \text{ and } |\mu_1 - \mu_2|\geq 0.05\right)$.

Finally, since benign clients are expected to be closer to the mean model update, their similarity scores should be closer to $1$ than those of malicious clients.
Therefore, when $K=2$, we assume that $C_2$ corresponds to the benign cluster.

\begin{algorithm}
\caption{\textsc{Dynamic\_Clustering}$(C,k_1,k_2,k_3)$}
\label{alg:dynamic_clustering}
\begin{algorithmic}[1]
\REQUIRE Similarities vector $C$, parameters $k_1,k_2,k_3$
\ENSURE Number of clusters $K$

\STATE Sort $C$ increasingly to get $(c_1,\dots,c_{|C|})$
\STATE Compute $\Delta_i \gets |c_{i+1}-c_i|$ for $i=1,\dots,|C|-1$
\STATE $\Delta_{\max} \gets \max_i \Delta_i$, \quad $\Delta_{\text{mean}} \gets \text{mean}_i(\Delta_i)$
\STATE Split $C$ into two parts around the largest gap (at $\arg\max_i \Delta_i$), obtaining clusters $C^{(1)}, C^{(2)}$
\STATE Compute $(\mu_1,\sigma_1)$ from $C^{(1)}$ and $(\mu_2,\sigma_2)$ from $C^{(2)}$

\IF{$\dfrac{\Delta_{\max}}{\Delta_{\text{mean}}}\ge k_1 \AND \dfrac{|\mu_1-\mu_2|}{\max(\sigma_1,\sigma_2)}\ge k_2 \AND |\mu_1-\mu_2|\ge 0.02$}
    \STATE $K \gets 2$
\ELSIF{$\dfrac{\Delta_{\max}}{\Delta_{\text{mean}}}\ge k_3 \AND |\mu_1-\mu_2|\ge 0.05$}
    \STATE $K \gets 2$
\ELSE
    \STATE $K \gets 1$
\ENDIF

\RETURN $K$
\end{algorithmic}
\end{algorithm}
\section{Experiment settings}\label{app:settings}


The datasets used in this work are FEMNIST, Synthetic, and Sentiment140. \textbf{FEMNIST} is a federated version of EMNIST~\cite{cohen2017emnist} created by LEAF~\cite{caldas2018leaf} available in their GitHub repository\footnote{\url{https://github.com/TalwalkarLab/leaf}}.
FEMNIST is a dataset of handwritten characters used for image classification.
LEAF takes the EMNIST dataset and divides it into 3550 non-iid clients.
It contains 62 classes (the numbers from 0 to 9, and 52 letters in both upper and lower case).
The \textbf{Synthetic} dataset is a dataset created also by LEAF.
It is designed to simulate statistical heterogeneity across clients.
Each client represents a synthetic task generated from client-specific model parameters and non-IID feature distributions.
In the configuration used in the paper, it contains 1~000 clients, 60 features, and 5 classes, with labels generated artificially as class indices 0–4 \textbf{Sentiment\footnote{\url{https://www.kaggle.com/datasets/kazanova/sentiment140}}} \cite{go2009twitter} is a dataset which contains 1~600~000 tweets extracted using the Twitter API.
The tweets have been annotated (0 = negative, 4 = positive) and they can be used to detect sentiment.

The model used for FEMNIST dataset is a Convolutional Neural Network (CNN) composed of four hidden layers: three 8x8 convolutional layers with 8, 16, and 24 channels respectively, and a fully connected layer of 128 neurons.
All hidden layers use the ReLu activation function, and the output layer has 62 neurons with softmax activation function.
We use Adam as the optimizer with a learning rate of 0.001.
Finally, the batch size is 64.

The model used for Synthetic dataset is a Multi-Layer Perceptron (MLP) composed of two hidden fully connected layers with 128 and 64 neurons, respectively.
All hidden layers use the ReLU activation function, and the output layer has 5 neurons with softmax activation function.
The input layer receives feature vectors of 60 dimensions

Finally, the model used for Sentiment140 dataset is a Long Short-Term Memory network (LSTM) designed for text classification.
It first includes an embedding layer that maps each input token into a dense vector representation, followed by a SpatialDropout1D layer with a dropout rate of 0.4 to reduce overfitting.
Then, an LSTM layer is used with dropout and recurrent dropout rates of 0.2.
Finally, the output layer has a number of neurons equal to the number of classes, with softmax activation function.
We use Adam as the optimizer and sparse categorical cross-entropy as the loss function.

In our case, for FEMNIST, Synthetic, and Sentiment140, we select 50, 20 and 20 random clients that remain fixed throughout the experiments.
The malicious clients are chosen randomly from the same client pool, and are re-sampled in each experiment run.
Additionally, we assume a scenario with no dropouts or \emph{stragglers}~\cite{park2021sageflow}, meaning the nodes remain active throughout the entire training process.


\section{Poisoning attacks considered}\label{app:poison}

The attacks considered in this work are selected as representative and influential poisoning strategies.
We study one canonical data poisoning method and several local model poisoning attacks, covering both simple and more advanced adversarial settings.
For data poisoning, we use the label-flipping attack, a common dirty-label approach where the adversary only alters labels and need not control the training pipeline.
For model poisoning, we focus on several untargeted attacks, as these have been reported to be harder to detect than targeted ones~\cite{kiourti2020trojdrl}.

\begin{enumerate}
	\item \textbf{Label Flipping: }Label flipping~\cite{lyu2020threats} is a data poisoning technique in which a client’s labels are systematically reassigned from one (source) class to another (target) class, while the underlying training samples remain unchanged. In our case we change the class randomly.

	\item \textbf{LIE: }LIE (Little Is Enough)~\cite{baruch2019little} is a model poisoning attack that adds small perturbations to the average model update.
		Malicious clients compute the mean ($\nabla^\numMal$) and standard deviation ($\sigma^\numMal$) of their weights after each round, then form the poisoned update as $\nabla^\numMal + z\sigma^\numMal$, where $z$ a user-chosen constant.
		This keeps the update within typical detection thresholds. In our experiments $z=1.5$. 

	\item \textbf{STATOPT: }STATOPT (Static Optimization)~\cite{fang2020local} first computes the average weights of the malicious clients ($\nabla^\numMal$) and defines a static adversarial direction $\omega = -sign(\nabla^\numMal)$.
		The malicious update transmitted to the server is $-\gamma\omega$, where $\gamma$ is a scalar selected by the attackers. In our experiments $\gamma=1$. 

	\item \textbf{Mimic: }The Mimic attack~\cite{karimireddy2020byzantine} selects a benign client whose update exhibits high variance by the formula \[\arg\max_{\lVert z\rVert = 1}\; 
z^{\top}\!\left(\sum_{t\in I_0}\sum_{i\in G}(x_i^{t}-\mu)(x_i^{t}-\mu)^{\top}\right) z,\\[4pt]\] and makes all malicious clients copy that update.
		This coordinated replication biases the global aggregation.
		Since the update originally comes from a benign client, the resulting attack becomes more challenging to detect.

	\item \textbf{Min-max: }The min-max attack, introduced in~\cite{shejwalkar2021manipulating}, aims to maximize the separation between benign and malicious updates while still bypassing the aggregation rule’s defenses.
		Malicious clients first compute the mean of their weights ($\nabla^\numMal$), then determine a perturbation direction $\theta$ and scale it by a factor $\gamma$, yielding the final adversarial update $\nabla^\numMal + \gamma\theta$.
		Here, $\theta$ is an adversarially chosen direction in the model update space, and $\gamma$ is obtained by solving an optimization problem that keeps the resulting update within the statistical limits defined by the other malicious clients. In our experiments, $\theta$ is the standard deviation of the weights.

	\item \textbf{Min-sum: }The min-sum attack, also proposed in~\cite{shejwalkar2021manipulating}, closely resembles min-max in that it uses the same key quantities: the average $\nabla^\numMal$, the direction $\theta$, and the scaling factor $\gamma$.
		However, in this case the optimization objective is to maximize the sum of distances between the perturbed update $\nabla^\numMal + \gamma\theta$ and all other malicious updates, subject to a constraint on their overall dispersion. In our experiments, $\theta$ is the standard deviation of the weights.
\end{enumerate}

\section{Results on Synthetic and Sentiment140 datasets}\label{app:restresults}

In this appendix we report the results of the experiments on the two
remaining datasets, namely the \emph{Synthetic} benchmark and
\emph{Sentiment140}, which complement the FEMNIST results discussed in the
main text. For each dataset we plot detection accuracy and final accuracy
under the same six poisoning attacks (Label Flipping, LIE, STATOPT, Mimic,
Min-Max, and Min-Sum) and the same range of malicious-client fractions
($10\%$--$40\%$). The attack-free FedAvg accuracy is reported as a dashed
reference line ($0.92$ for Synthetic and $0.80$ for Sentiment140). For
context, we also include FLAegis (our prior, non-ZK-friendly defense) as an
upper-bound reference.

\paragraph{Synthetic.} \Cref{fig:SYNTHETIC_comparison_det} shows that our
detector identifies essentially all malicious clients across the full range
of attacks and ratios, with detection rates at or near $1.0$ on LIE,
STATOPT, Mimic, and Label
Flipping, and remaining high ($\ge 0.8$ in mean) Min-Sum. On the Synthetic dataset, our method struggles more under the Min-max attack. FLAegis exhibits the same near-perfect
behavior, as expected. The remaining detector-based baselines are each
strong on a subset of attacks but brittle on the others: Lomar is essentially
perfect under LIE, Mimic, Min-Max, and Min-Sum but degrades monotonically
under STATOPT (down to $\approx 0.60$ at $40\%$) and is consistently weaker
than ours under Label Flipping; M3D-FL is the most uniform baseline but
never the best, declining steadily with the malicious fraction (e.g., from
$\approx 0.90$ at $10\%$ to $\approx 0.60$ at $40\%$ on Mimic and Min-Sum);
SafeFL is acceptable only on Label Flipping and collapses elsewhere, falling
below $0.3$ under Min-Max and Min-Sum at $40\%$.

The corresponding final accuracies in \Cref{fig:SYNTHETIC_comparison_acc}
mirror the results of FEMNIST. Our method stays
close to the ideal reference ($\approx 0.90$ across all attacks and ratios,
with a single mild dip to $\approx 0.8$ at $40\%$ under STATOPT and $0.85$ in Min-sum), and is
indistinguishable from FLAegis except in this last setting. The non-ZK
detector-based baselines exhibit the trade-off we already observed in the
main text: Lomar is near-ideal on most attacks but drops to $\approx 0.22$
under STATOPT at $40\%$; M3D-FL collapses under LIE (down to $\approx 0.38$)
and under STATOPT (down to $\approx 0.67$); SafeFL fails under LIE
($\approx 0.37$ at $40\%$) and STATOPT (with very large variance at $10\%$);
and Mimic at $20$--$40\%$ degrades M3D-FL to $\approx 0.65$--$0.76$. The
classical robust aggregators Median and Krum are the most stable baselines
and remain around $0.87$--$0.90$, although they degrade monotonically with
the malicious fraction and, except in a few isolated configurations, stay
below our accuracy.

\paragraph{Sentiment140.} The detection results in
\Cref{fig:sentiment140_comparison_det} are qualitatively consistent with
those on Synthetic. Our method achieves detection rates of $\approx 0.95$
on Label Flipping, $\approx 1.0$ on LIE and Mimic, $\ge 0.8$ on STATOPT,
and $\approx 0.85$--$0.97$ on Min-Max and Min-Sum, with low variance across
runs. FLAegis behaves almost identically. Lomar attains perfect detection
under LIE, Mimic, Min-Max, and Min-Sum, but---as on Synthetic---degrades
sharply under STATOPT, falling below $0.60$ at $40\%$. M3D-FL again provides
uniform but never best-in-class detection, and SafeFL is the weakest
baseline, dropping below $0.7$ on most attacks at low malicious fractions
and remaining the most variable defense overall.

The final-accuracy plot in \Cref{fig:sentiment140_comparison_acc} confirms
the same trend. Our method tracks FLAegis and the ideal reference closely
on Label Flipping, LIE, Mimic, Min-Max, and Min-Sum (mean accuracy
$0.76$--$0.79$ across all ratios). The only regime in which our method is
clearly outperformed is STATOPT, where its accuracy decreases with the
malicious fraction to $\approx 0.75$ at $40\%$ while SafeFL, Median, and
Krum remain near $0.79$; FLAegis also remains near $0.79$ in this setting,
confirming that the gap is a property of the additional ZK-friendliness
constraint rather than of the underlying detection idea. On the remaining
attacks the picture matches the main results: Lomar collapses under STATOPT
(down to $\approx 0.51$ at $40\%$), M3D-FL is consistently weaker on LIE,
Mimic, Min-Max, and Min-Sum (around $0.63$--$0.68$), and SafeFL fails on
LIE ($\approx 0.57$ at $40\%$); Median and Krum stay stable but slightly
below our accuracy.

\begin{figure*}[!htbp]
	\centering
	\includegraphics[width=\linewidth]{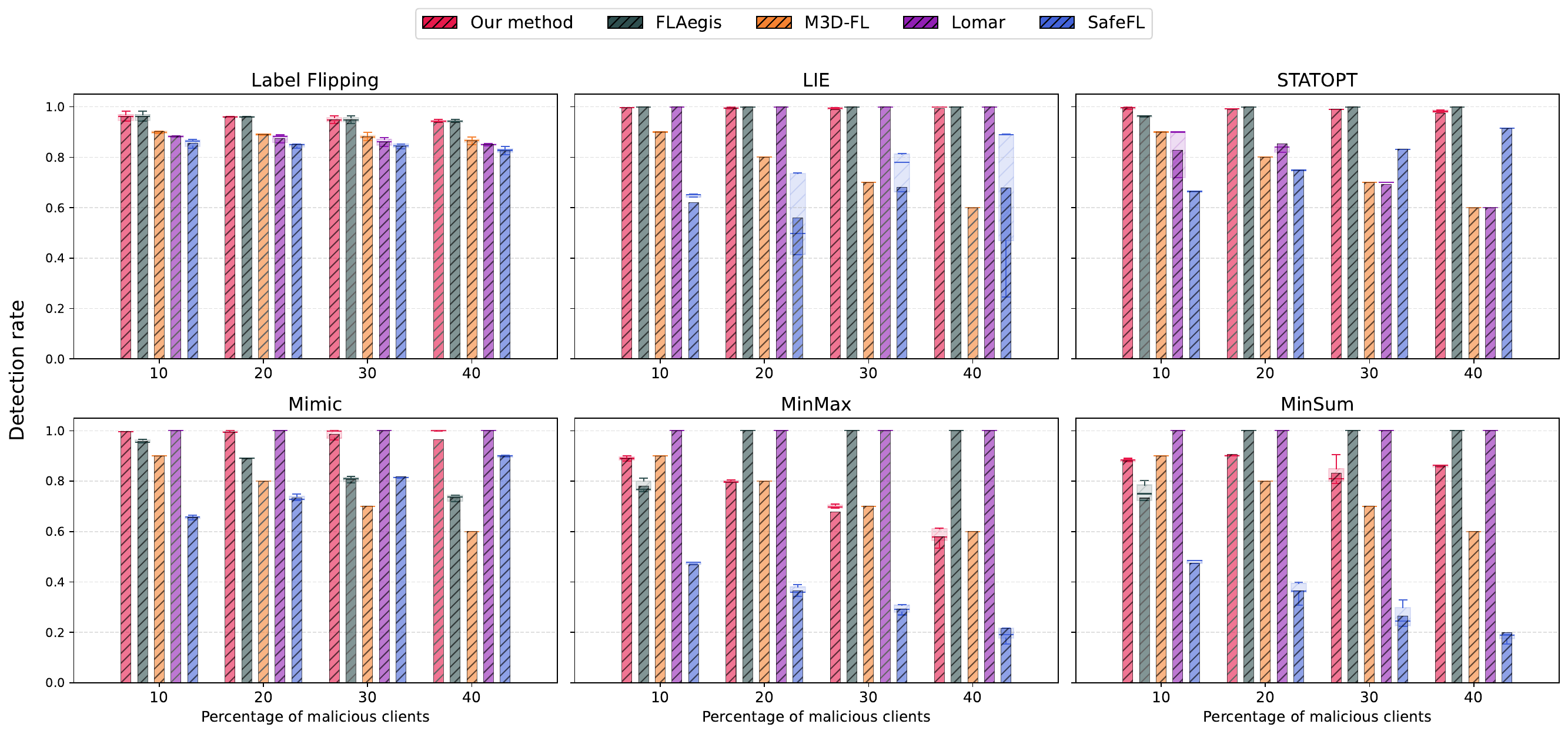}
	\caption{Detection accuracy of the different methods against the different poisoning attacks with different percentages with the synthetic dataset.}
	\label{fig:SYNTHETIC_comparison_det}
\end{figure*}

\begin{figure*}[!htbp]
	\centering
	\includegraphics[width=\linewidth]{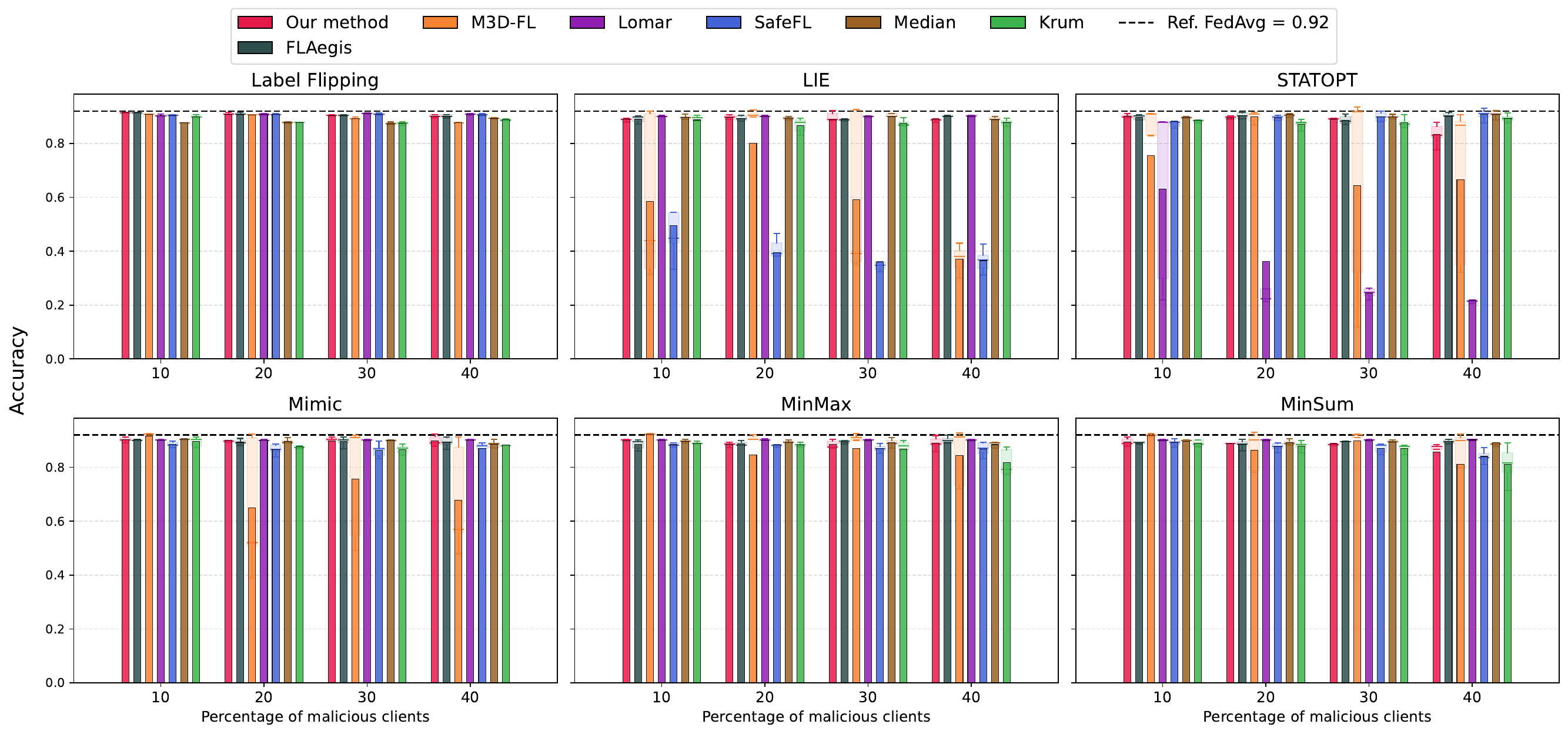}
	\caption{Final accuracy of the different methods against the different poisoning attacks with different percentages with the synthetic dataset.}
	\label{fig:SYNTHETIC_comparison_acc}
\end{figure*}

\begin{figure*}[!htbp]
	\centering
	\includegraphics[width=\linewidth]{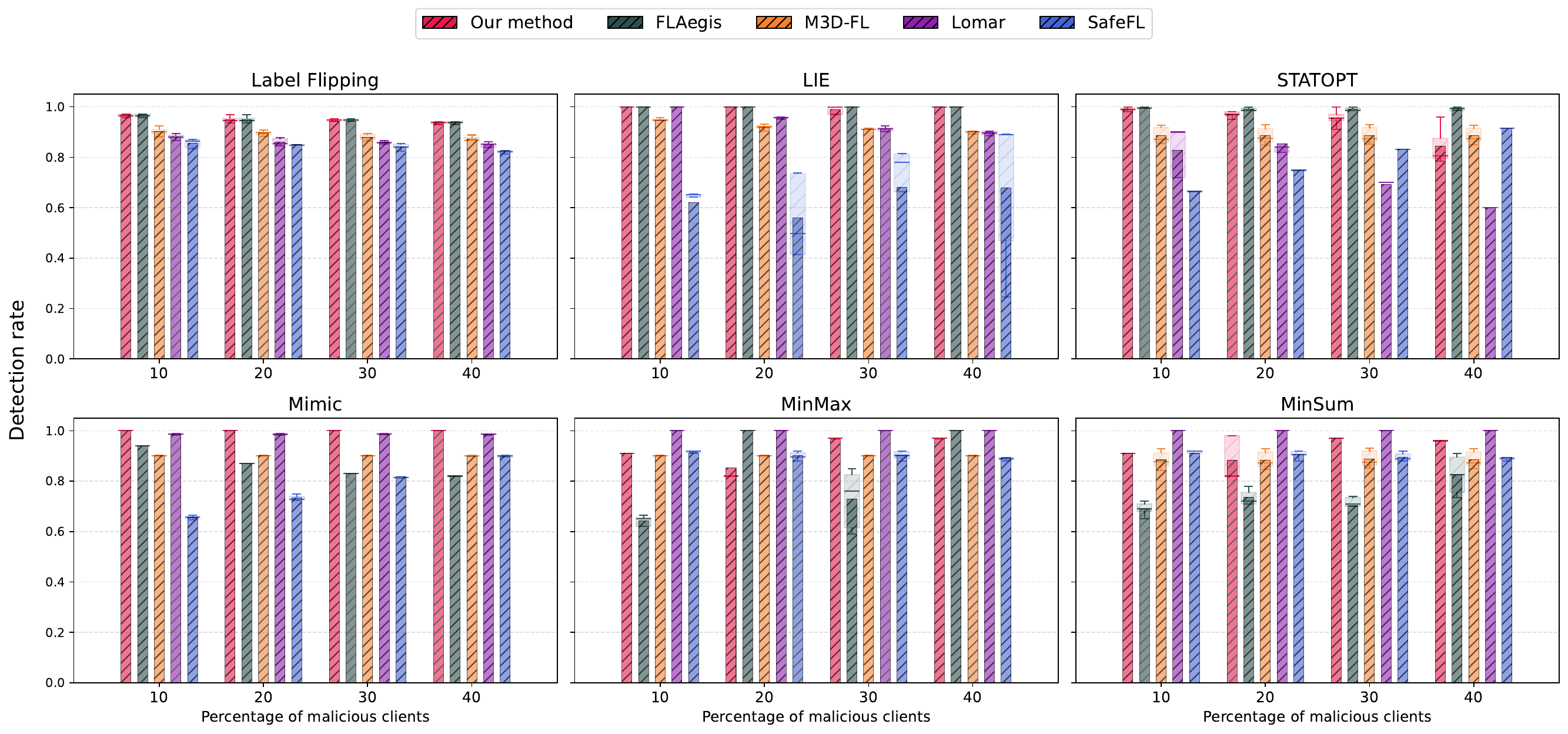}
	\caption{Detection accuracy of the different methods against the different poisoning attacks with different percentages with the sentiment140 dataset.}
	\label{fig:sentiment140_comparison_det}
\end{figure*}

\begin{figure*}[!htbp]
	\centering
	\includegraphics[width=\linewidth]{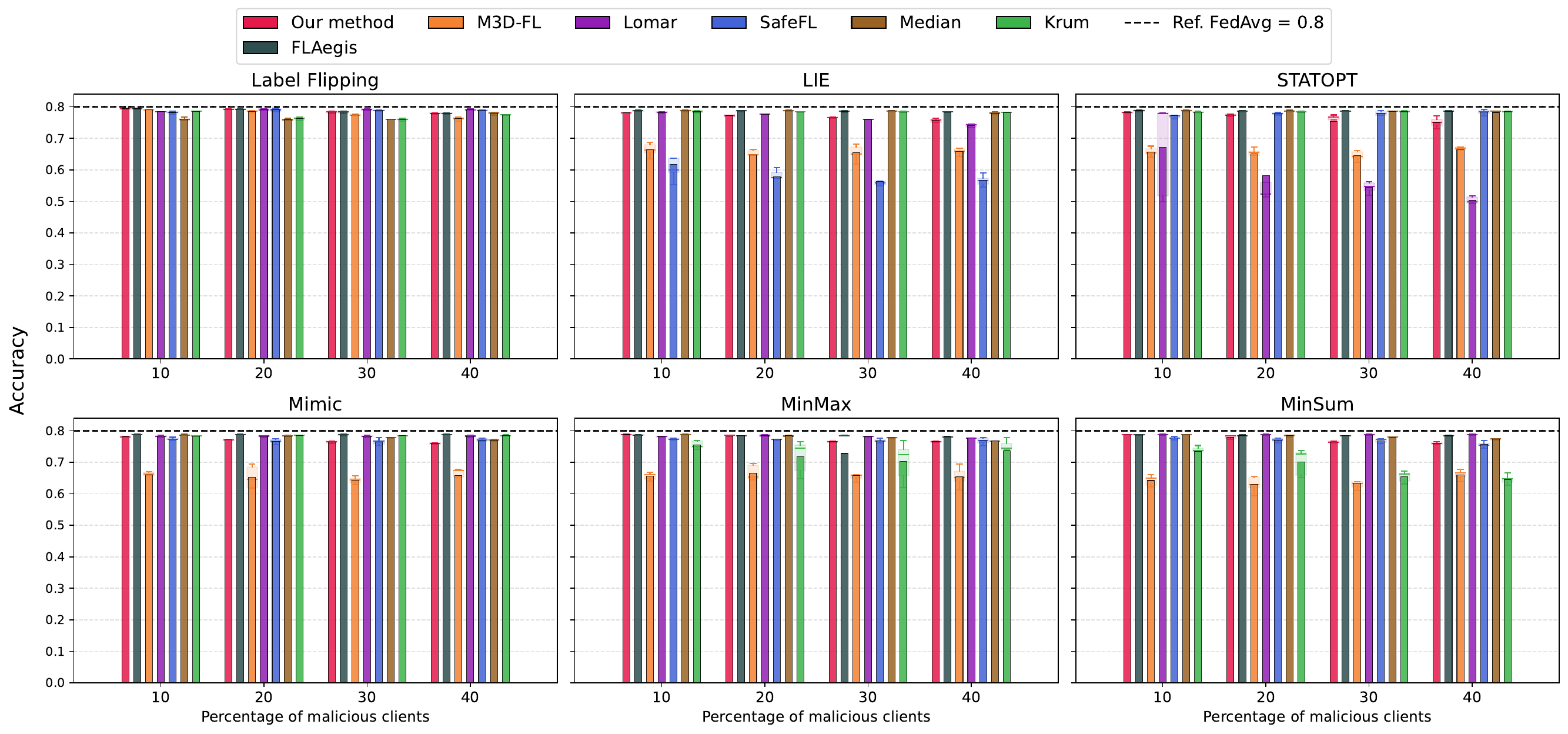}
	\caption{Final accuracy of the different methods against the different poisoning attacks with different percentages with the sentiment140 dataset.}
	\label{fig:sentiment140_comparison_acc}
\end{figure*}

\paragraph{Summary.} Across both additional datasets, the same picture
established in the main text holds: every non-verifiable baseline that
edges out our method on a particular attack collapses on at least one
other, while our method withstands all attacks without catastrophic failure
and tracks FLAegis closely in almost every configuration. These appendix
results therefore reinforce, on two further datasets, the claim that our
ZK-friendly defense achieves the most uniform robustness although we have to implement verifiable functions. Additionally, we show in Table \ref{tab:comparison_app} an extension of \Cref{tab:comparison} with all the datasets expanded.

\begin{table}[]
\centering
\begin{tabular}{llcc}
\hline
 & \multicolumn{3}{c}{\textbf{AD}} \\ \hline
\textbf{Method} & \textbf{Dataset} & \textbf{Mean} & \textbf{Standard deviation} \\ \hline
\multirow{4}{*}{Our method} & FEMNIST & 0.0188 & 0.0136 \\
 & sentiment140 & 0.0299 & 0.0040 \\
 & Synthetic & 0.0315 & 0.0081 \\ \cline{2-4} 
 & Average & 0.0267 & 0.0086 \\ \hline
\multirow{4}{*}{FLAegis} & FEMNIST & 0.0115 & 0.0021 \\
 & sentiment140 & 0.0171 & 0.0074 \\
 & Synthetic & 0.0261 & 0.0030 \\ \cline{2-4} 
 & Average & 0.0182 & 0.0042 \\ \hline
\multirow{4}{*}{M3D-FL} & FEMNIST & 0.2824 & 0.2441 \\
 & sentiment140 & 0.1473 & 0.0065 \\
 & Synthetic & 0.1559 & 0.1178 \\ \cline{2-4} 
 & Average & 0.1952 & 0.1228 \\ \hline
\multirow{4}{*}{Lomar} & FEMNIST & 0.1244 & 0.2296 \\
 & sentiment140 & 0.0618 & 0.0914 \\
 & Synthetic & 0.1262 & 0.2404 \\ \cline{2-4} 
 & Average & 0.1041 & 0.1871 \\ \hline
\multirow{4}{*}{SafeFL} & FEMNIST & 0.4527 & 0.3584 \\
 & sentiment140 & 0.0667 & 0.0850 \\
 & Synthetic & 0.1368 & 0.2125 \\ \cline{2-4} 
 & Average & 0.2187 & 0.2186 \\ \hline
\multirow{4}{*}{Median} & FEMNIST & 0.0393 & 0.0104 \\
 & sentiment140 & 0.0213 & 0.0033 \\
 & Synthetic & 0.0241 & 0.0044 \\ \cline{2-4} 
 & Average & 0.0282 & 0.0060 \\ \hline
\multirow{4}{*}{Krum} & FEMNIST & 0.0303 & 0.0001 \\
 & sentiment140 & 0.0473 & 0.0457 \\
 & Synthetic & 0.0467 & 0.0090 \\ \cline{2-4} 
 & Average & 0.0414 & 0.0183 \\ \hline
\end{tabular}
\caption{Summary table of the AD results for each dataset and method.}
\label{tab:comparison_app}
\end{table}

\section{Cost Estimations}\label{s:cost_estimates}

In \Cref{tab:estimation_protocol}, we provide an estimate of the number of multi-exponentiations (MExp) required for proof generation and verification for both the main protocol and each extension discussed in \Cref{sec:robust_verif_fl}.
The proof generation cost includes the multi-exponentiations performed during the Schnorr proof generation and simulation phases, as well as the computation of Pedersen commitments.
For verification, we assume that the verifier employs a batch verification mechanism, whereby multiple proofs are verified simultaneously through a single aggregated verification equation.
To prevent a malicious prover from constructing correlated invalid proofs that cancel each other out during aggregation, the verifier samples a random coefficient vector A and applies it to each proof instance in the aggregated verification equation.
Batch verification significantly reduces the computational cost of verification.
However, this optimization sacrifices perfect soundness, i.e., it introduces a negligible, but non-zero, probability that a malicious prover succeeds in passing verification, which for the selected parameters is however negligible.

\begin{table*}[!t]
	\begin{center}
\caption{Estimated operation costs for proof generation and verification of the protocol and its extensions, expressed in terms of multi-exponentiations. We denote by $\mathrm{ME}_n$ a multi-exponentiation involving $n$ generators of $\GG$.}
\label{tab:estimation_protocol}
\begin{tabular}{l|p{9cm}|p{5cm}}
\toprule
 & Proof Generation & Verification \\
\midrule
Protocol & $ (\numClients+1)$ME$_{\dimModel + 1} + \dimModel $ME$_{\numClients + 1} + (3\numClients+3\dimModel) $ME$_3 + \bigl(9\numClients(2\precision + \log_2(\dimModel \numClients)) + 19\numClients + 2\dimModel \log_2(\numClients) + 7\dimModel + \log_2(\numClients) + 1\bigr) $ME$_2 + \bigl(3\numClients(2\precision + \log_2(\dimModel \numClients) + 1) + (2\dimModel+1)(\log_2(\numClients) + 1)\bigr)$ME$_1$ & $ \numClients(2\dimModel + 24(2\precision + \log_2(\numClients) + \log_2(\dimModel)) + 58) + \dimModel\bigl(10\log_2(\numClients) + 17\bigr) + 5\log_2(\numClients) + 5$\\
Extension 1 & ME$_{\dimModel + 1}$ + $(6\precision + \dimModel + 3)$ME$_2$ +  $2(3\precision +2)$ME$_1$ per client & $\numClients(4\dimModel + 30\precision + 13)$  \\
Extension 2 & $3\numClients(2\precision + 2\log_2(\numClients) + \log_2(\dimModel) + 1)$ME$_2$ + $\numClients(2\precision + 2\log_2(\numClients) + \log_2(\dimModel) +1)$ME$_1$ & $11\numClients(2\precision + 2\log_2(\numClients) + \log_2(\dimModel) + 1)$ \\
Extension 3 &  $2\numClients(2\precision +1)($ME$_1$ + ME$_2$)& $\numClients(20\precision +6)$ \\

\bottomrule
\end{tabular}
\end{center}
\end{table*}

The \Cref{tab:estimation_protocol2} gives an estimated execution time for four sets of parameters.
We have obtained those results with estimator \texttt{zka.lc}~\cite{zkalc}.
We ran the estimation with the following environment:  sec256k1 elliptic curve, gnark-crypto library and AWS EC2 m6i.8xlarge machine.

\begin{table*}[!t]
\begin{center}
\caption{Estimated execution time (in seconds) for proof generation and verification of the protocol and its extensions.}
\label{tab:estimation_protocol2}

\begin{tabular}{l|cc|cc|cc|cc}
\toprule
& \multicolumn{2}{c|}{$\dimModel = 10^5, \numClients = 10^2$}
& \multicolumn{2}{c|}{$\dimModel = 10^6, \numClients = 10^2$}
& \multicolumn{2}{c|}{$\dimModel = 10^5, \numClients = 10^3$}
& \multicolumn{2}{c}{$\dimModel = 10^6, \numClients = 10^3$} \\
\cmidrule(lr){2-3} \cmidrule(lr){4-5} \cmidrule(lr){6-7} \cmidrule(lr){8-9}
& Proof & Ver. & Proof & Ver. & Proof & Ver. & Proof & Ver. \\
\midrule

Protocol
& 107.7 & 8.71
& 147.00 & 86.00
& 358.66 & 84.00
& 1803.09 & 842.00 \\

Extension 1
& 15.64 & 16.85
& 155.89 & 168.21
& 15.64 & 168.21
& 155.89 & 1682.00 \\

Extension 2
& 4.71 & 0.048
& 4.92 & 0.050
& 51.2 & 0.408
& 53.2 & 0.422 \\

Extension 3
& 2.84 & 0.026
& 2.84 & 0.026
& 28.39 & 0.224
& 28.39 & 0.224 \\

\bottomrule
\end{tabular}
\end{center}
\end{table*}

\end{document}